\documentclass{article}
\usepackage{amsmath,amsfonts}
\usepackage{amssymb}
\usepackage{algorithm}
\usepackage{algorithmic}
\usepackage{multirow}
\usepackage{colortbl}
\usepackage{arydshln} 
\usepackage{nicematrix} 
\usepackage{tablefootnote}
\usepackage{microtype}
\usepackage{graphicx}
\usepackage{subcaption}
\usepackage{booktabs} 

\usepackage{array}
\usepackage{textcomp}
\usepackage{stfloats}
\usepackage{url}
\usepackage{verbatim}
\usepackage{graphicx}
\usepackage{cite}
\usepackage{tabularx}
\usepackage[algo2e,linesnumbered,noend]{algorithm2e}
\usepackage{graphicx}
\usepackage{amssymb}
\usepackage{epstopdf}
\usepackage{multicol}
\usepackage{tabularx}
\usepackage{caption}
\usepackage{enumitem}
\usepackage{booktabs}
\usepackage{pifont}
\usepackage{bm}
\usepackage{wrapfig}
\usepackage{amsmath}
\usepackage{pifont}
\usepackage{placeins}
\usepackage{lipsum}
\usepackage{dsfont}
\usepackage{mathrsfs}
\usepackage{amsthm}
\usepackage{float}
\usepackage{microtype}
\usepackage{graphicx}
\usepackage{array} 
\usepackage{makecell}
\usepackage{booktabs} 
\usepackage{wrapfig}
\usepackage{blindtext}
\usepackage{enumitem}
\usepackage{adjustbox}
\usepackage{booktabs}
\usepackage[table]{xcolor}
\usepackage{algorithm}
\definecolor{aliceblue}{RGB}{176,223,229}

\definecolor{zzblue}{RGB}{156,183,245}

\usepackage{subcaption}

\usepackage{amsmath}
\usepackage{amssymb}
\usepackage{mathtools}
\usepackage{amsthm}
\usepackage{wrapfig}
\newcommand{\our}{Block-wise 0-th Taylor Sparse Attention}
\newcommand{\mean}{\mathrm{mean}}
\newcommand{\annotate}[1]{\hfill {\small \textcolor{gray}{#1}}}
\newcommand{\hybridattn}{In-Context Sparse Attention (ISA)}
\newcommand{\taylorattn}{\text{Block-Wise 0-th Taylor Sparse Attention}}
\newcommand{\flashattn}{\text{FlashAttention}}
\newcommand{\GatherFunc}{\text{Gather}}
\newcommand{\scatter}{\text{Scatter}}
\newcommand{\cat}{\mathrm{Concat}}

\definecolor{C0}{rgb}{0.121569, 0.466667, 0.705882}
\definecolor{C1}{rgb}{1.000000, 0.498039, 0.054902}
\definecolor{C2}{rgb}{0.172549, 0.627451, 0.172549}
\definecolor{C3}{rgb}{0.839216, 0.152941, 0.156863}
\definecolor{C4}{rgb}{0.580392, 0.403922, 0.741176}
\definecolor{C5}{rgb}{0.549020, 0.337255, 0.294118}
\definecolor{C6}{rgb}{0.890196, 0.466667, 0.760784}
\definecolor{C7}{rgb}{0.498039, 0.498039, 0.498039}
\definecolor{C8}{rgb}{0.737255, 0.741176, 0.133333}
\definecolor{C9}{rgb}{0.090196, 0.745098, 0.811765}

\definecolor{iccvblue}{rgb}{0.21,0.49,0.74}

\usepackage{hyperref}

\usepackage[accepted]{icml2026}

\theoremstyle{plain}
\newtheorem{theorem}{Theorem}[section]

\theoremstyle{definition}

\theoremstyle{remark}

\usepackage[textsize=tiny]{todonotes}
\icmltitlerunning{\texttt{LIVEditor-14B}: Lightning Unified Video Editing via In-Context Sparse Attention}

\begin{document}

\twocolumn[
  \icmltitle{\texttt{LIVEditor-14B}:\\Lightning Unified Video Editing via In-Context Sparse Attention}

  \icmlsetsymbol{equal}{*}
  \icmlsetsymbol{corresponding}{$\dagger$}

  \begin{icmlauthorlist}
    \icmlauthor{Shitong Shao}{equal,hkustgz}
    \icmlauthor{Zikai Zhou}{equal,hkustgz}
    \icmlauthor{Haopeng Li}{hkustgz}
    \icmlauthor{Yingwei Song}{uofa}
    \icmlauthor{Wenliang Zhong}{hkustgz}
    \icmlauthor{Lichen Bai}{hkustgz}
    \icmlauthor{Zeke Xie}{corresponding,hkustgz}
  \end{icmlauthorlist}

  \icmlaffiliation{hkustgz}{Hong Kong University of Science and Technology (GZ), Hong Kong}
  \icmlaffiliation{uofa}{University of Arizona, USA}

  \icmlcorrespondingauthor{Zeke Xie}{zekexie@hkust-gz.edu.cn}

  \icmlkeywords{Machine Learning, ICML}

  \vskip 0.3in

\begin{center}
    \centering
    {\includegraphics[width=1.0\linewidth,height=0.34\textheight]{figures/live_visualization.pdf}}
    
    \captionof{figure}{\textbf{Visualization of \texttt{LIVEditor-14B} (ISA)}. The superior video editing performance of \texttt{LIVEditor-14B} (ISA) stems from a unified framework that leverages in-context sparse attention. More results can be found in Appendix~\ref{apd:visualization}.}
    \label{fig:first_presentation}
  \end{center}

  \vskip 0.3in
] 



\printAffiliationsAndNotice{\icmlEqualContribution}


\begin{abstract}
 Video editing has evolved toward In-Context Learning (ICL) paradigms, yet the resulting quadratic attention costs create a critical computational bottleneck. In this work, we propose \textbf{I}n-context \textbf{S}parse \textbf{A}ttention (\textbf{ISA}), the first near-lossless empirical sparse framework tailored for ICL video editing. Our design is grounded in two key insights: \textbf{\textit{first}}, context tokens exhibit significantly lower saliency than source tokens; \textbf{\textit{second}}, we theoretically prove and empirically validate that Query sharpness correlates with approximation error. Motivated by these findings, ISA implements an efficient \textit{pre-selection} strategy to prune redundant context, followed by a dynamic \textit{query grouping} mechanism that routes high-error queries to full attention and low-error ones to a computationally efficient 0-th order Taylor sparse attention. Furthermore, we build \textbf{\texttt{LIVEditor-14B}}, a novel lightning video editing model via ISA and a proposed video-editing data pipeline that curated a 1.7M high-quality dataset. Extensive experiments demonstrate that \texttt{LIVEditor-14B} achieves a $\sim$60\% reduction in attention-module latency while surpassing state-of-the-art methods across EditVerseBench, IVE-Bench, and VIE-Bench, delivering near-lossless acceleration without compromising visual fidelity. Our code is released in \href{https://github.com/xie-lab-ml/Lightning-Unified-Video-Editor-via-In-Context-Sparse-Attention}{LIVEditor-14B}.

\end{abstract}

\section{Introduction}
\label{sec:intro}


Recent vision foundation models~\citep{hunyuanvideo,kong2024hunyuanvideo,wan2025,veo3,kling15,seedance,nanobanana} are increasingly unified and applied to downstream tasks. While image editing~\citep{nanobanana,z_image} is mature, video editing is rapidly advancing from domain-specific methods~\citep{ju2023direct,zhang2023adding} to unified frameworks~\citep{omniv2v,veggie}~\citep{editverse,instructx,ditto}. Simultaneously, architectures are shifting from complex cross-attention mechanisms~\citep{cross_attn_avid,cross_attn_cococo,cross_attn_vace,channel_fff_vdi} toward scalable In-Context Learning (ICL) paradigms~\citep{univideo,instructx,full_attn_unic,editverse}, which maximize information assimilation by directly concatenating context and source tokens via full attention.

\begin{figure}[t]
    \centering
    \includegraphics[width=0.9\linewidth,trim={0 0 0cm 0},clip]{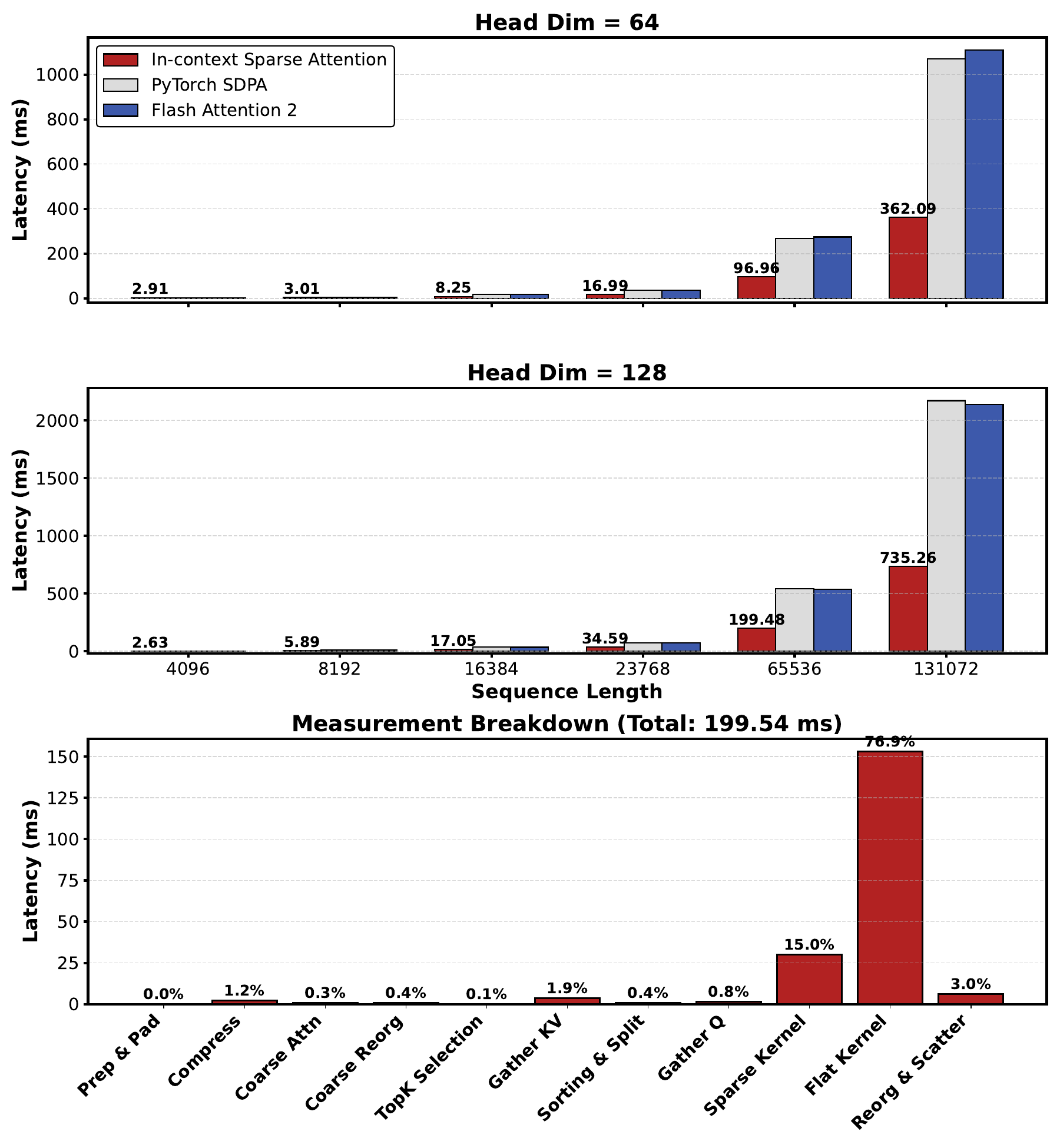}
    \vspace{-10pt}
    \caption{
   The speedup of ISA relative to SDPA and FA2 becomes increasingly pronounced as sequence length grows. Within the ISA framework, the computational cost is dominated by the sparse kernel (0-order Taylor attention) and the flat kernel (full-attn), while the overhead of remaining operations is negligible.
    }
    \label{fig:attention_benchmark}
    \vspace{-12pt}
\end{figure}


\textbf{Challenge.} However, in video generation, attention mechanisms represent the primary computational bottleneck due to the inherent long-sequence characteristics of video data. As the sequence length scales from 5K to 50K, the computational cost increases quadratically with sequence length. This limitation is further exacerbated by attention in ICL. Specifically, in video editing tasks, the number of context tokens is typically commensurate with the number of source tokens, quadrupling the computational cost and consequently leading to substantial increases in GPU memory usage and latency. Most existing sparse attention mechanisms~\citep{sparge_attn,vs_attn,st_attn,radial_attn} are designed based on general video generation and fail to account for the distinction between context tokens and source tokens, thus underutilizing the specific characteristics of the ICL scenario to design efficient and high-performance sparse attention mechanisms.

\textbf{Contribution.} In this work, we address the critical absence of efficient sparse attention mechanisms for ICL through a systematic investigation that bridges theoretical insights with practical application.

\vspace{-10pt}

\begin{itemize}[leftmargin=*, topsep=0pt, itemsep=0pt, parsep=0pt]
    \item \textbf{Key Finding.} We first revisit the attention mechanism in ICL contexts through a rigorous distribution analysis. Our investigation reveals a pivotal observation: context tokens typically contribute a negligible proportion to the total attention score, indicating limited saliency. This suggests that a vast majority of context tokens can be effectively pruned without compromising representational fidelity, provided the most critical tokens are retained.
    
    \item \textbf{Theoretical Analysis.} We theoretically demonstrate that Query sharpness is proportional to the approximation error of the 0-th order Taylor expansion. This establishes Query sharpness as an efficient indicator for dynamic grouping: high-sharpness queries demand precise computation to preserve fidelity, whereas low-sharpness queries can be safely approximated to save cost.
    
    \item \textbf{ISA.} We propose \textbf{I}n-context \textbf{S}parse \textbf{A}ttention (\textbf{ISA}), an experimentally lossless attention for video editing. ISA first retains critical tokens via a pre-selection strategy, and then innovatively employs a novel grouping mechanism: high-error queries utilize full attention for fine-grained features, while low-error queries use our block-wise 0-th order Taylor sparse attention. This method approximates interactions via tiled Key-Value means, reducing complexity from $\mathcal{O}(N^2D)$ to $\mathcal{O}(N^2D/b)$.
    
    \item \textbf{\texttt{LIVEditor-14B}.} We build \textbf{\texttt{LIVEditor-14B}}, an experimentally lossless lightning video editing model, via ISA and a proposed video-editing data pipeline. The proposed data pipeline constructed a massive dataset comprising over 1.7M high-quality video editing pairs, systematically categorized into tasks such as style transfer, object swapping, and human editing, generated via a comprehensive automated pipeline involving VLMs and diffusion models. \texttt{LIVEditor-14B} demonstrates the effectiveness of ISA and our data pipeline at scale.
    
    \item \textbf{Empirical Success.} Our experiments demonstrate that \texttt{LIVEditor-14B} achieves near-lossless acceleration and superior performance. \textbf{First}, ISA reduces attention-module latency by approximately 60\% compared to standard SDPA and FlashAttention v2/3 (Fig.~\ref{fig:attention_benchmark}). \textbf{Second}, \texttt{LIVEditor-14B} consistently outperforms state-of-the-art models (e.g., Ditto~\citep{ditto}, InsV2V~\citep{insv2v}, Lucy Edit~\citep{lucyedit}) across benchmarks like EditVerseBench~\citep{editverse} and VIE-Bench~\citep{instructx}. \textbf{Finally}, ISA generalizes effectively to training-free settings, accelerating pre-trained models without degradation and offering significantly better visual quality than alternative sparse mechanisms.
\end{itemize}

\begin{figure*}[t]
    \centering
    \includegraphics[width=0.85\linewidth,trim={0 0 0cm 0},clip]{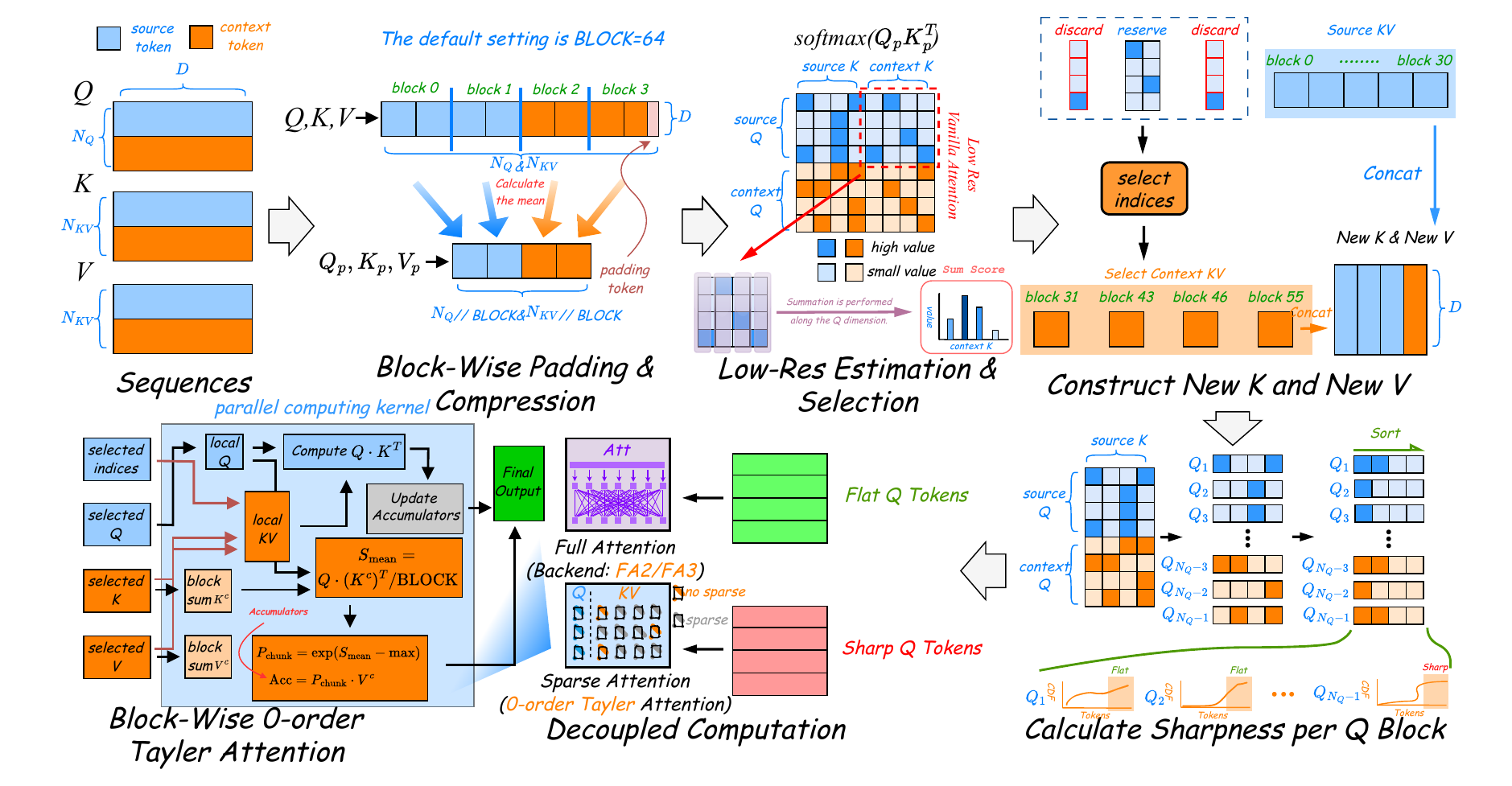}
    \vspace{-18pt}
    \caption{
    \textbf{The workflow of In-Context Sparse Attention (ISA)}. ISA optimizes efficiency by enforcing sparsity across both Query and Key/Value dimensions. The process begins by identifying and retaining only the most salient Key and Value pairs from the context tokens. Subsequently, Queries are partitioned based on a computed sharpness metric. Queries exhibiting high sharpness undergo full attention computation, while flat Queries, characterized by minimal Taylor expansion errors, are processed using an accelerated block-wise 0-order Taylor attention mechanism.
    }
    \label{fig:in_context_sparse_attention}
    \vspace{-8pt}
\end{figure*}

\section{Preliminary}
\label{sec:pre}

\textbf{Standard Attention.} A general attention mechanism processes the following inputs: a Query tensor $Q \in \mathbb{R}^{B \times H \times N \times D}$, a Key tensor $K \in \mathbb{R}^{B \times H \times S \times D}$, and a Value tensor $V \in \mathbb{R}^{B \times H \times S \times D}$, where $B$, $H$, $N$, $S$, and $D$ denote the batch size, number of attention heads, sequence length of the Queries, sequence length of the Keys, and head dimension, respectively. The mechanism first computes the score matrix as $S_{QK} = QK^\top / \sqrt{D}$, subsequently derives the attention weights via $P_{QK} = \text{softmax}(S_{QK})$, and finally produces the output $O = P_{QK}V$.

\textbf{Block Sparse Attention.} Sparsification of standard attention is typically achieved by reducing the effective size of $K$ and $V$, i.e., by selecting a subset of indices from $S$ for computation. Early approaches employed element-wise sparse attention by defining a binary mask $M \in \{0,1\}^{N \times S}$ and pruning computations via $S_{QK} = M \odot S_{QK}$ (where $\odot$ denotes the Hadamard product). However, this unstructured sparsity pattern is generally ill-suited for hardware acceleration. A more hardware-efficient alternative is to operate at the block level. For video models, the model first forms contiguous spatiotemporal tiles, then flattens these tiles in tile order as a sequence. We then partition this sequence into non-overlapping blocks, denoted as $\{Q_i \in \mathbb{R}^{B\times H \times L_Q \times D}\}_{i=1}^{N_Q}$, $\{K_j \in \mathbb{R}^{B\times H \times L_K \times D}\}_{j=1}^{N_K}$, and $\{V_j \in \mathbb{R}^{B\times H \times L_K \times D}\}_{j=1}^{N_K}$. Here, $L_Q$ and $L_K$ represent the block sizes, while $N_Q = \lceil N / L_Q \rceil$ and $N_K = \lceil S / L_K \rceil$ denote the number of Query and Key/Value blocks, respectively. Under this framework, the block mask assumes a shape of $N_Q\!\times\! N_K$, where $M_{ij}=0$ indicates that the computation of the attention scores $Q_i K_j^\top$ and the subsequent aggregation ${(P_{QK})}_{ij} V_j$ are bypassed.

\begin{figure}[t]
    \centering
    \includegraphics[width=0.6\linewidth,trim={0 0 0cm 0},clip]{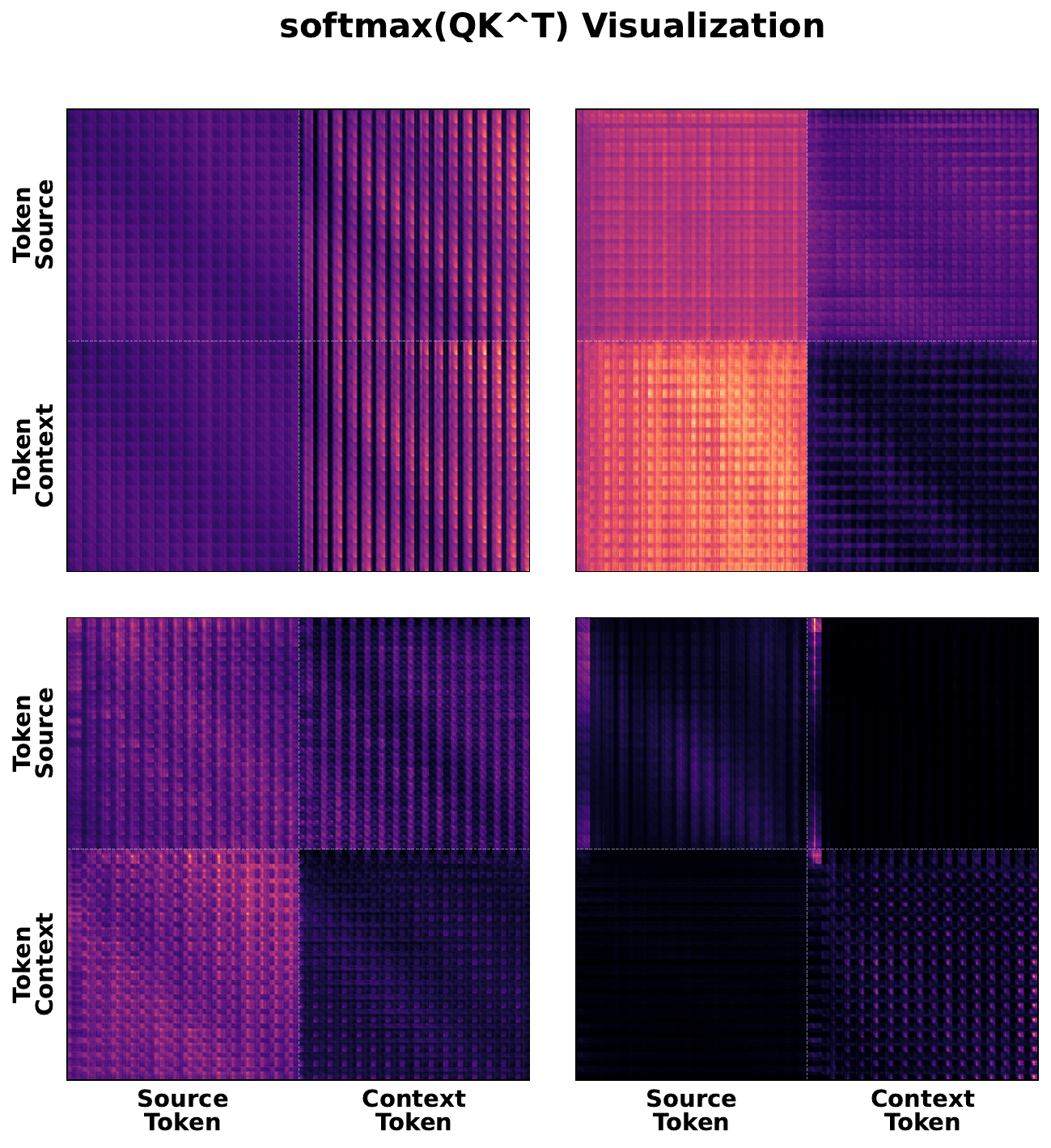}
    \vspace{-5pt}
    \caption{
    In ICL, the attention matrix exhibits distinct distributional characteristics, manifesting as four discernible regions. This structural pattern suggests that attention mechanisms should be specifically tailored for ICL scenarios.
    }
    \label{fig:QK_analysis}
    \vspace{-12pt}
\end{figure}

\begin{figure}[t]
    \centering
    \includegraphics[width=0.8\linewidth,trim={0 0 0cm 0},clip]{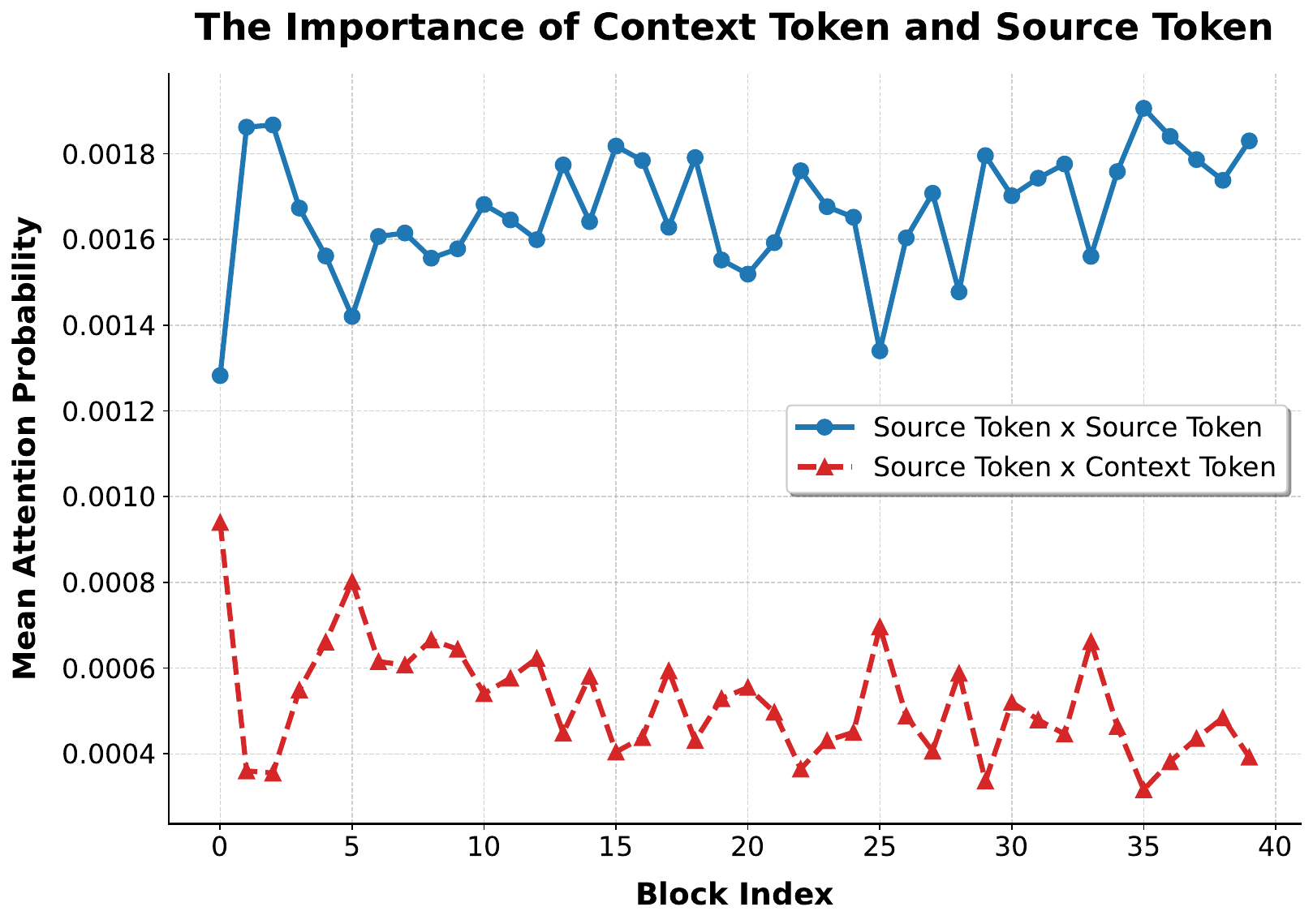}
    \vspace{-6pt}
    \caption{
    In ICL, attention scores between source Queries and source Keys are typically significantly higher than those between source Queries and context Keys. This disparity becomes increasingly pronounced in deeper layers. 
    }
    \label{fig:analysis_context_token}
    \vspace{-12pt}
\end{figure}

\textbf{Pooling Attention.} Efficiently determining the binary values of $M_{ij}$ necessitates a lightweight selection mechanism. Pooling attention~\citep{sparge_attn,vs_attn} substantiates to be a highly suitable solution for this purpose. As illustrated in the ``\textit{Block-Wise Padding \& Compression}'' part of Fig.~\ref{fig:in_context_sparse_attention}, pooling attention first applies pooling along the sequence dimension to yield the compressed representations $Q^{\text{c}} \in \mathbb{R}^{B\times H \times N_Q \times D}$, $K^{\text{c}},V^{\text{c}} \in \mathbb{R}^{B\times H \times N_K \times D}$. Because this sequence derives from the spatial-temporal tile ordering used in video encoders, the coarse representations preserve local structure while still being hardware-friendly. Standard attention is then performed on these tensors, effectively reducing the computational complexity from $\mathcal{O}(N\!S\!D)$ to $\mathcal{O}(N_Q\!N_K\!D)$. Finally, a Top-$k$ selection strategy is applied to the attention map $S_\text{coarse} = \text{softmax}(Q^{\text{c}}(K^{\text{c}})^\top)$ to derive the block mask $M_\text{coarse} \in \mathbb{R}^{B\times H \times N_Q \times N_K}$.

\section{In-Context Sparse Attention}
\label{sec:method}

In this section, we first detail \textit{pre-selection} employed by ISA to identify critical context tokens. Subsequently, we introduce \textit{block-wise 0-th order Taylor sparse attention}, an efficient algorithm for attention approximation. We then analyze the correlation between Query sharpness and the Taylor approximation error. Finally, we present \textit{grouped computation}, which executes attention operations of varying complexity across distinct Query groups.

\textbf{Motivation of \textit{Pre-Selection}.} The principal distinctions between full attention in ICL and that in general video generation tasks are twofold: \textbf{(1)} the token count is effectively doubled, and \textbf{(2)} the tokens exhibit a distinct structural division into source and context tokens, which are stored contiguously in hardware memory. A fundamental question arises: do context tokens and source tokens contribute equally to the attention in video editing? To investigate this, we visualize the attention score matrix in Fig.~\ref{fig:QK_analysis}. The distributions of the four interaction patterns—source Query $Q^\text{src} \times$ source Key $(K^\text{src})^\top$, context Query $Q^\text{ctx} \times$ context Key $(K^\text{ctx})^\top$, cross-term $Q^\text{src}(K^\text{ctx})^\top$, and cross-term $Q^\text{ctx}(K^\text{src})^\top$—exhibit clearly distinguishable characteristics. Furthermore, we plotted the distribution of scores across different model blocks, as shown in Fig.~\ref{fig:analysis_context_token}. It can be observed that the values of $Q^\text{src}(K^\text{src})^\top$ are significantly larger than those of $Q^\text{src}(K^\text{ctx})^\top$, and this trend becomes more pronounced in deeper layers.

\textbf{Implementation of \textit{Pre-Selection}.} Given that context tokens exhibit significantly lower saliency than source tokens within the attention mechanism, we posit that the majority of context tokens are redundant. By leveraging pooling attention, we derive the compressed score matrix $S_\text{coarse} \in \mathbb{R}^{B\times H \times N_Q \times N_K}$. Let $L_{src}$ and $L_{ctx}$ denote the number of source and context tokens, respectively, satisfying $L_{src}+L_{ctx}=N=S$ (where we assume divisibility by the block size $L_Q$ for notational simplicity). Consequently, the slice $S_\text{coarse}[:,:,:\!\lceil L_{src}/ L_Q \rceil,:\!\lceil L_{src}/ L_Q \rceil]$ corresponds to the scores of the source tokens, whereas $S^\text{ctx}_\text{coarse} = S_\text{coarse}[:,:,:\!\lceil L_{src}/ L_Q \rceil,\lceil L_{src}/ L_Q \rceil\!:]$ represents those of the context tokens. Therefore, the importance ranking of context Key/Value pairs is as $I_\text{topk}\in \mathbb{R}^{B\times H \times N_K}$
\begin{equation}
    \begin{split}
        &= \text{TopK}(\text{Mean}(S^\text{ctx}_\text{coarse},\text{axis}=2),\text{axis}=2).\\
    \end{split}
    \label{eq:pre_selection_1}
\end{equation}
Subsequently, we finalize \textit{pre-selection} by reconstructing the sparse tensors via \textit{gather} and \textit{concatenation} operations:
\begin{equation}
\fontsize{8pt}{11pt}\selectfont
    \begin{split}
        &K^\text{ctx}_\text{sel} = \text{Gather}(K^\text{ctx},I_\text{topk}),V^\text{ctx}_\text{sel} = \text{Gather}(V^\text{ctx},I_\text{topk})\\
        & K_\text{new} = \text{Concat}([K^\text{src},K^\text{ctx}_\text{sel}]),V_\text{new} = \text{Concat}([V^\text{src},V^\text{ctx}_\text{sel}]). \\
    \end{split}
    \label{eq:pre_selection_2}
\end{equation}
We introduce a hyperparameter, the Select Ratio $\alpha_s$, which dictates that the Top-$k$ operator retains the $\alpha_s \lceil L_\text{ctx}/b \rceil$ most salient context blocks. This mechanism effectively reduces the computational complexity from $\mathcal{O}(NSD)$ to $\mathcal{O}(N(L_\text{src} + \alpha_s L_\text{ctx})D)$.

\begin{figure}[t]
    \centering
    \includegraphics[width=0.8\linewidth,trim={0 0 0cm 0},clip]{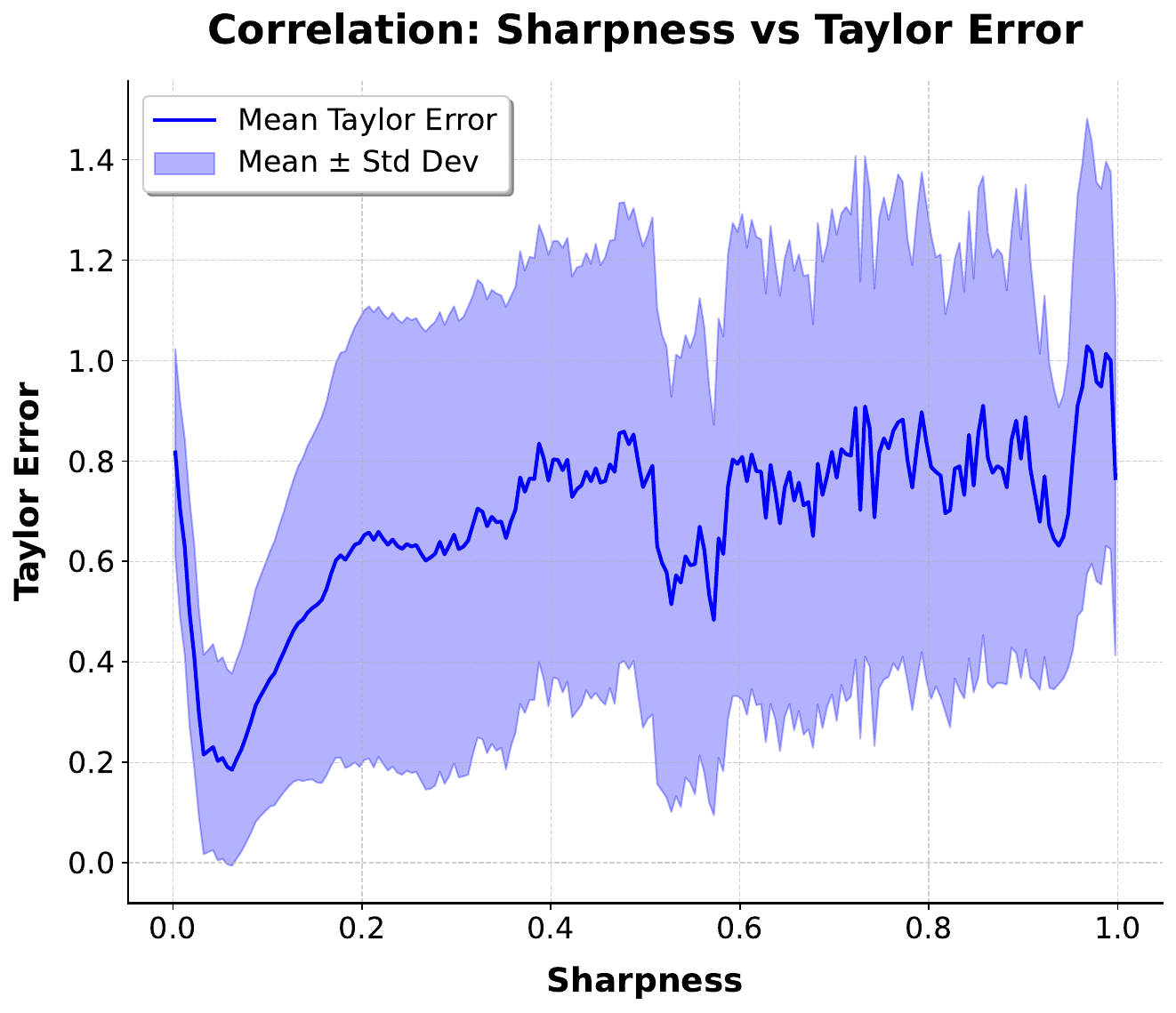}
    \vspace{-5pt}
    \caption{
   This visualization results demonstrate that the Taylor error $\mathcal{E}_i$ is directly proportional to the coarse-grained variance $M_i$ (sharpness). Consequently, $M_i$ serves as an effective indicator of the magnitude of $\mathcal{E}_i$, providing empirical support for the conclusion of Theorem~\ref{the:sharpness_vs_taylor_error}. The distribution of $||\!Q\!(K\!-\!K^c\!)^\top\!||_\infty^2$ is in Fig.~\ref{fig:q_i_k_k_c}.
    }
    \label{fig:linear_positive_correlation}
    \vspace{-12pt}
\end{figure}

\textbf{Block-Wise 0-th order Taylor Sparse Attention.} Upon deriving the compressed tensors $K_\text{new}$ and $V_\text{new}$, we partition the Queries and route them to distinct computational kernels: standard FlashAttention v2/3 and sparse attention. Specifically, the sparse attention denotes \textit{block-wise 0-th order Taylor sparse attention}. Within this mechanism, the fundamental strategy is to select a subset of salient blocks for exact computation via $\text{OnlineSoftmax}(Q_iK_j^\top)V_j$, while the remaining blocks utilize a 0-th order Taylor approximation of the Key and Value tensors to accelerate processing. Here, $\text{OnlineSoftmax}$ denotes the block-wise computation of the softmax function (refer to Algorithm~\ref{alg:block_wise_sparse_attention}, lines 19--26). As illustrated in the corresponding part of Fig.~\ref{fig:in_context_sparse_attention}, the implementation of this sparse attention mechanism necessitates the pre-computation of the compressed tensors $K^c$ and $V^c$, alongside the pooling score matrix $S_\text{coarse}$ and the block mask $M_\text{coarse}$. For a given Query block $Q_i \in \mathbb{R}^{B\times H \times L_Q \times D}$, the computational pathway is determined conditionally by the mask entries $M_{ij}$. Specifically, when $M_{ij}=1$, the following operation is executed:
\begin{equation}
    \begin{split}
        & S_{ij} = Q_iK_j^\top\cdot 1/\sqrt{D},\ P_{ij} = \text{exp}(S_{ij}),\\
        &\ell_i +\!\!= \text{rowsum}(P_{ij}),\ O_i +\!\!= P_{ij}V_j, \\
    \end{split}
    \label{eq:taylor_sparse_attention_1}
\end{equation}
where $\ell_i$ and $O_i$ denote the softmax normalization factor and the output accumulator, respectively. For the sake of clarity, we omit the subtraction of the maximum value typically applied for numerical stability in the exponential calculation.

Conversely, when $M_{ij}=0$, the block interaction term $\exp(Q_iK_j^\top)V_j$ is approximated via its 0-th order Taylor expansion as $\exp(Q_i (K_j^c)^\top)V_j^c$. This approximation reduces the computational complexity from $\mathcal{O}(L_QL_KD)$ to $\mathcal{O}(L_QD)$, and the update rule is formulated as follows:
\begin{equation}
    \begin{split}
        & S_{ij}^c = Q_i (K^c_j)^\top\cdot 1/\sqrt{D},\ P_{ij}^c = \text{exp}(S_{ij}^c),\\
        &\ell_i +\!\!= P_{ij}^c\cdot L_K,\ O_i +\!\!= P_{ij}^c V^c_j\cdot L_K. \\
    \end{split}
    \label{eq:taylor_sparse_attention_2}
\end{equation}
Finally, the calculation is completed by performing $O_i = O_i / \ell_i$. We also investigated 1st- and 2nd-order Taylor expansions. However, implementation trials revealed that these variants are ill-suited for hardware acceleration and incur prohibitive computational overhead. Consequently, they were discarded from the final design. A comprehensive algorithm flowchart is provided in Appendix~\ref{apd:implementation_isa}.

\begin{theorem}\label{the:sharpness_vs_taylor_error}(Proof in Appendix~\ref{apd:taylor_proof})
Let $S_1(i) = \sigma(z_i)$ and $S_2(i) = \sigma(\tilde{z}_i)$ be the true and approximate attention distributions for token $i \in \text{Block}_u$. Let $M_u \triangleq \mathbb{E}_{i \in \text{Block}_u} [\|S_2(i)\|_2^2]$ be the \textbf{block-wise sharpness metric}. 
Assuming the attention energy $f(i) = \|S_2(i)\|_2^2$ is $L$-Lipschitz within the block, where $L$ is determined by the spectral norms of projection weights $\|W_Q\|, \|W_K\|$, the expected approximation error $\mathcal{E}_u = \mathbb{E}_{i \in \text{Block}_u}[\|S_1(i) - S_2(i)\|_2^2]$ is bounded by:
\begin{equation}
    \mathcal{E}_u \le (M_u + L\delta) \cdot \mathbb{E}_{i \in \text{Block}_u} [\|\Delta z_i\|_2^2] + \mathcal{C}_H \mathbb{E}[\|\Delta z_i\|_2^4]
\end{equation}
where $\delta$ is the block diameter and $\mathcal{C}_H$ is a constant related to the maximum curvature (Hessian) of the softmax function.
\end{theorem}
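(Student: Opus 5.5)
The plan is a second-order Taylor expansion of the softmax map $\sigma$ around the approximate logits $\tilde z_i$, evaluated at the true logits $z_i = \tilde z_i + \Delta z_i$. This yields
\[
S_1(i) - S_2(i) = J(\tilde z_i)\,\Delta z_i + R_i,
\]
where $J(\tilde z) = \mathrm{diag}(\sigma(\tilde z)) - \sigma(\tilde z)\sigma(\tilde z)^\top$ is the softmax Jacobian. The remainder satisfies $\|R_i\|_2 \le \tfrac12 \sup\|\nabla^2\sigma\|\,\|\Delta z_i\|_2^2$, since the softmax Hessian is uniformly bounded. Writing $p = S_2(i)$, the squared error is then controlled by
\[
\|S_1(i)-S_2(i)\|_2^2 \le 2\|J\Delta z_i\|_2^2 + 2\|R_i\|_2^2 .
\]
The second term is at most a constant times $\|\Delta z_i\|_2^4$, which produces the $\mathcal{C}_H\,\mathbb{E}[\|\Delta z_i\|_2^4]$ term. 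The absolute factor $2$ is absorbed into the constants; alternatively one can expand the square exactly and absorb the cross term by Young's inequality.

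The key step is bounding the Jacobian operator norm by the sharpness $\|p\|_2^2$. Since $J = \mathrm{diag}(p) - pp^\top$ is symmetric positive semidefinite, $\|J\|_{op} \le \max_k p_k$. I would aim for the bound $\|J\Delta z\|_2^2 \le \|p\|_2^2\,\|\Delta z\|_2^2$ up to an absolute constant. To get it, write $(J\Delta z)_k = p_k(\Delta z_k - \bar{\Delta z})$ with $\bar{\Delta z} = p^\top\Delta z$. Then
\[
\|J\Delta z\|_2^2 = \sum_k p_k^2(\Delta z_k-\bar{\Delta z})^2 \le \max_k(\Delta z_k-\bar{\Delta z})^2\,\|p\|_2^2 .
\]
This is bounded by $4\|\Delta z\|_\infty^2\|p\|_2^2 \le 4\|\Delta z\|_2^2\|p\|_2^2$. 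The appearance of $\|\Delta z\|_\infty$ here is consistent with the paper's figure referencing $\|Q(K-K^c)^\top\|_\infty^2$. This gives the pointwise bound $\|S_1(i)-S_2(i)\|_2^2 \lesssim f(i)\,\|\Delta z_i\|_2^2 + C\|\Delta z_i\|_2^4$.

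Next comes averaging over $i\in\text{Block}_u$. The difficulty is that $\mathbb{E}[f(i)\|\Delta z_i\|^2]$ is not $M_u\,\mathbb{E}\|\Delta z_i\|^2$ in general. To handle this I use the Lipschitz hypothesis: for any $i$ in the block, $f(i) \le \mathbb{E}_{j}f(j) + \sup_{j}|f(i)-f(j)| \le M_u + L\delta$, because any two tokens of the block are within distance $\delta$. Substituting this uniform bound before taking expectations gives
\[
\mathcal{E}_u \le (M_u+L\delta)\,\mathbb{E}_{i\in\text{Block}_u}\|\Delta z_i\|_2^2 + \mathcal{C}_H\,\mathbb{E}\|\Delta z_i\|_2^4,
\]
with absolute constants folded into the normalization. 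The dependence of $L$ on $\|W_Q\|,\|W_K\|$ would be justified by noting that $\tilde z_i = x_iW_Q(K^c)^\top/\sqrt D$. Hence $f$ is a composition of the Lipschitz map $x\mapsto \sigma(\tilde z)$ (softmax is $1$-Lipschitz in $\ell_2$ from the Jacobian bound) with $p\mapsto\|p\|^2$, which is $2$-Lipschitz on the simplex.

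I expect the main obstacle to be the Jacobian-to-sharpness step. The clean operator-norm bound gives only $\max_k p_k$, not $\|p\|_2^2$, so matching the stated form requires the coordinatewise computation above. That computation introduces an $\ell_\infty$ factor, and I will relax it to $\ell_2$ at the cost of an absolute constant. If the constants cannot be made exactly $1$, I would state the result with absolute constants absorbed, consistent with the informal use of $\mathcal{C}_H$.
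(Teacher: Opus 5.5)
Your proposal has the same three-step skeleton as the paper's proof:
\begin{itemize}
\item a second-order expansion of $\sigma$ about $\tilde z_i$ with a uniformly bounded Hessian remainder;
\item a pointwise bound of the linear term by the energy $f(i)=\|S_2(i)\|_2^2$;
\item the Lipschitz replacement $f(i)\le M_u+L\delta$ before averaging. The paper writes this as $|f(i)-M_u|\le L\delta$ inside $\mathbb{E}[f(i)g(i)]$, which is the same step.
\end{itemize}
The one real divergence is the Jacobian lemma, and the obstacle you anticipate there does not exist. What enters is the \emph{square} of the operator norm, and $0\preceq J=\mathrm{diag}(p)-pp^\top\preceq\mathrm{diag}(p)\preceq p_{\max}I$. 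Hence
\[
\|J\Delta z\|_2^2\le\|J\|_{\mathrm{op}}^2\,\|\Delta z\|_2^2\le p_{\max}^2\,\|\Delta z\|_2^2\le\|p\|_2^2\,\|\Delta z\|_2^2 .
\]
This is exactly the paper's Step 2 conclusion $\|J(\tilde z_i)\|_2^2\le\|S_2(i)\|_2^2$, which also rests on $p_{\max}^2\le\|p\|_2^2$. You get it with constant $1$ and a cleaner justification than the paper gives. The paper's formula $\|J\|_2=p_{\max}(1-p_{\max})$ is not correct in general; for two classes, $\|J\|_2=2p_1p_2$.

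Your coordinatewise identity $(J\Delta z)_k=p_k(\Delta z_k-\bar{\Delta z})$ costs a factor $4$ once you relax to $\ell_2$. Before relaxing, however, it gives something the paper's route does not: $\|J\Delta z_i\|_2^2\le 4\|S_2(i)\|_2^2\,\|\Delta z_i\|_\infty^2$. This bound is dimension-free in $\Delta z_i$, whereas $\|\Delta z_i\|_2^2$ grows with the number of keys. It is also exactly the $\|Q(K-K^c)^\top\|_\infty^2$ form that the main text attributes to this theorem.

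On constants, note that the stated coefficient on $(M_u+L\delta)$ is exactly $1$, and neither argument literally delivers it. Your splitting $\|a+b\|_2^2\le 2\|a\|_2^2+2\|b\|_2^2$ accounts for the cross term $2\langle J\Delta z_i,R_i\rangle$. Young's inequality would instead give $1+\epsilon$, at the price of an $\epsilon$-dependent $\mathcal{C}_H$. The paper's Step 1 simply omits the cross term when passing from $\Delta p_i=J\Delta z_i+R_i$ to $\|\Delta p_i\|_2^2\le\|J\|_2^2\|\Delta z_i\|_2^2+\mathcal{C}_H\|\Delta z_i\|_2^4$. So if you substitute the operator-norm bound for your factor-$4$ step, your proof matches the paper's. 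It is, if anything, more honest about the leading constant, which both arguments justify only up to $1+\epsilon$.
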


\begin{figure}[t]
    \centering
    \includegraphics[width=0.8\linewidth,trim={0 0 0cm 0},clip]{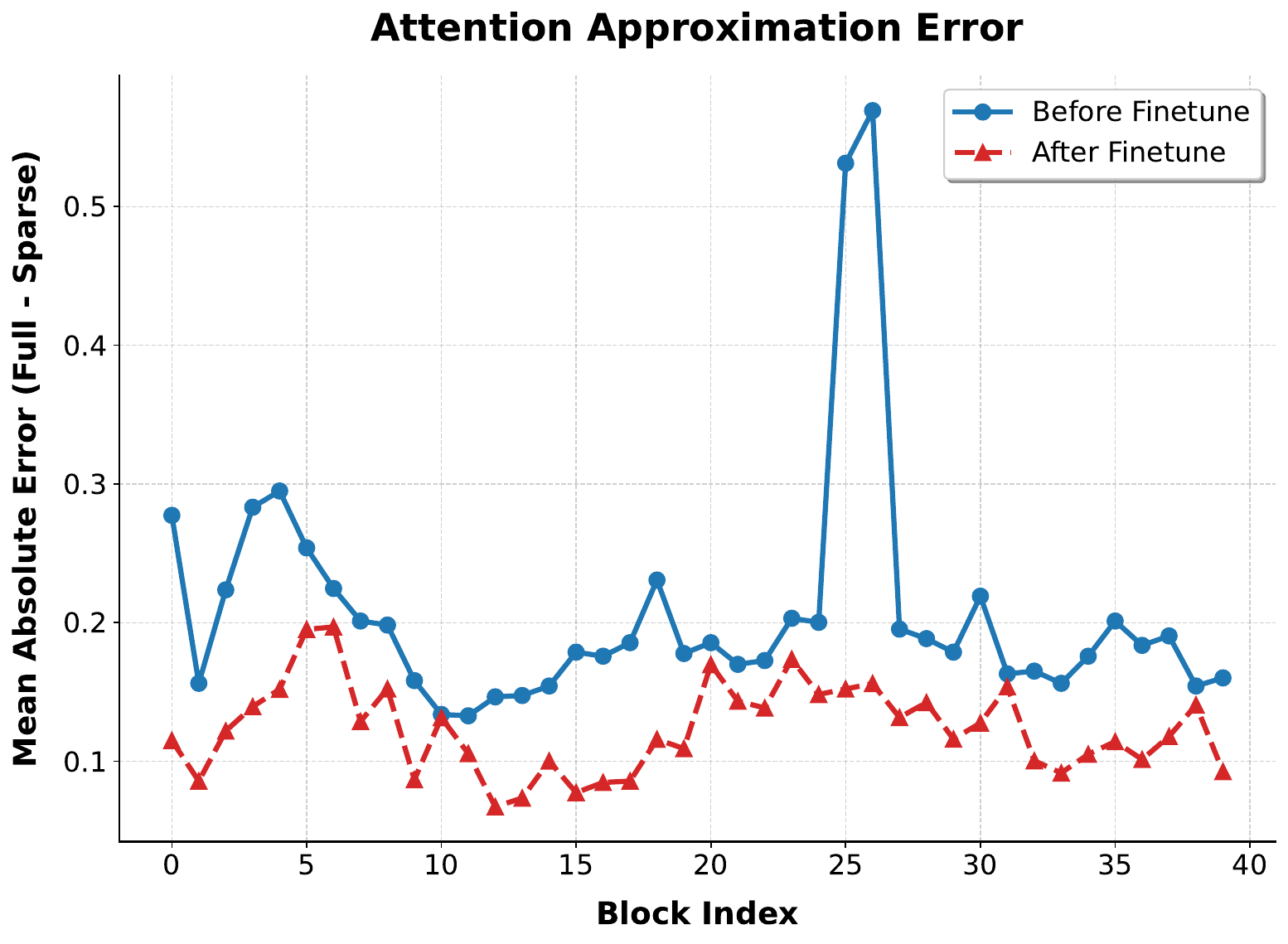}
    \vspace{-5pt}
    \caption{
    ISA is a trainable sparse attention mechanism. After post-training, the discrepancy between the output of ISA and that of full attention is significantly reduced across nearly all blocks.
    }
    \label{fig:analysis_error_finetune}
    \vspace{-12pt}
\end{figure}

\begin{figure}[t]
    \centering
    \includegraphics[width=1.0\linewidth,trim={0 0 0cm 0},clip]{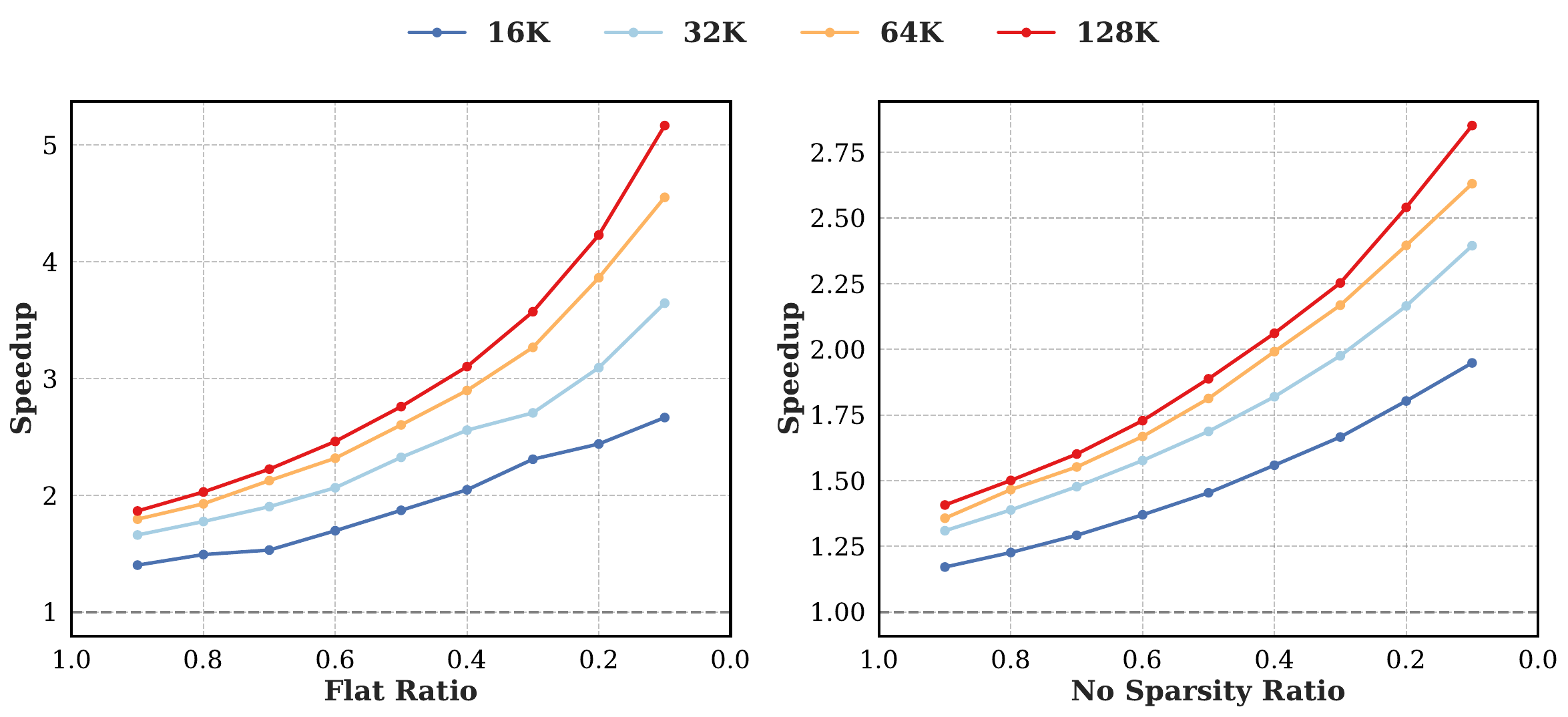}
    \vspace{-15pt}
    \caption{
    Reducing the Flat Ratio $\alpha_f$ and No Sparsity Ratio $\alpha_{ns}$ leads to an exponential increase in the speedup of ISA relative to SDPA. Moreover, this speedup becomes increasingly pronounced as sequence length grows.
    }
    \label{fig:speedup_benchmark_curves}
    \vspace{-12pt}
\end{figure}

\textbf{\textit{Grouped Computation}.} Intuitively, Queries can be dichotomized based on the magnitude of the approximation error induced by sparse attention: those exhibiting high error and those with negligible error. The proof of Theorem 1 establishes that the error of the block-wise 0-th order Taylor sparse attention is upper-bounded by the product of $||Q(K - K^c)^\top||_\infty^2$ and $M_i = [ \text{Var}(\text{softmax}(Q^{\text{c}}_i (K^{\text{c}})^\top)) ]$. Here, the former term characterizes the mean intra-block variance, while the latter quantifies the variance between block means. This suggests that both $||Q(K - K^c)^\top||_\infty^2$ and $M_i$ are potential candidates for indexing the Taylor approximation error, thereby facilitating the grouping of Queries. However, computing the former is computationally prohibitive, and we further demonstrate in Appendix~\ref{apd:q_i_k_k_c} that it is an ineffective proxy for the Taylor error. In contrast, $M_i$, derived efficiently from the pooling score matrix, exhibits a strong positive correlation with the Taylor error, as evidenced in Fig.~\ref{fig:linear_positive_correlation}. Consequently, we adopt $M_i$ as our selection metric, formally defining it as sharpness. Under this adaptive framework, Queries exhibiting high sharpness (indicating high Taylor error) are routed to the standard attention branch, whereas those with low sharpness (indicating low Taylor error) are processed via block-wise 0-th order Taylor sparse attention. This strategy effectively mitigates the adverse impact of high-error Queries, enabling ISA to achieve a sparsity of 93.75\% in the Taylor sparse attention component with negligible performance degradation.

\begingroup
\begin{table*}[t]
    \caption{\textbf{EditVerseBench Evaluation Results.} Our proposed \textbf{\texttt{LIVEditor}} (ISA) outperforms related methods across nearly all metrics and surpasses \textbf{\texttt{LIVEditor}} (full-attn).}
    \label{tab:editversebench_1}
        \vskip -0.05in
    \centering
    \small 
\setlength{\tabcolsep}{6pt} 
\renewcommand{\arraystretch}{0.75}
        \resizebox{.9\linewidth}{!}{\begin{tabular}{lcccccc}
        \toprule
        \multirow{2}{*}{\textbf{Method}} & \multicolumn{4}{c}{\textbf{VLM Evaluation}} & \multicolumn{2}{c}{\textbf{Pick Score}} \\
        \cmidrule(lr){2-5} \cmidrule(lr){6-7}
        & \textbf{Quality} & \textbf{Text Align} & \textbf{Temporal Consistency} & \textbf{Editing Quality} & \textbf{Frame} & \textbf{Video} \\
        \midrule
        
        \rowcolor{blue!8}\multicolumn{7}{l}{\textbf{Attention Manipulation (Training-free)}} \\
        \midrule
        TokenFlow & 6.09 & 19.97 & 26.64 & 24.05 & 98.67 & 98.82 \\
        STDF & 5.70 & 19.70 & 26.34 & 23.43 & 96.31 & 96.08 \\
        
        \midrule
        \rowcolor{blue!8}\multicolumn{7}{l}{\textbf{First-Frame Propagation (w/ End-to-End Training)}} \\
        \midrule
        Se\~{n}orita-2M & 7.54 & 19.71 & 27.14 & 24.32 & 98.03 & 98.12 \\
        
        \midrule
        \rowcolor{blue!8}\multicolumn{7}{l}{\textbf{Instruction-Guided (w/ End-to-End Training)}} \\
        \midrule
        InsV2V & 5.95 & 19.52 & 25.67 & 22.77 & 97.22 & 96.83 \\
        Lucy Edit & 6.70 & 19.52 & 26.23 & 23.28 & 98.56 & 98.44 \\
        EditVerse & 7.65 & 20.07 & 26.73 & 23.93 & 98.56 & 98.42 \\
        \textbf{\texttt{LIVEditor} (full-attention)} & 7.62 & 19.98 & 27.13 & 23.80 & 99.24 & {99.19} \\
        \textbf{\texttt{LIVEditor} (ISA)} & \textbf{7.89} & \textbf{20.09} & \textbf{27.19} & \textbf{24.55} & \textbf{99.32} & \textbf{99.22} \\
        \bottomrule
    \end{tabular}
        }
    \vskip -0.1in
\end{table*}
\endgroup
\begingroup
\setlength{\tabcolsep}{6pt} 
\begin{table*}[t]
    \caption{\textbf{EditVerseBench Evaluation Results.} In the training-free scenarios, we compare our proposed ISA against several attention mechanism variants, including Radial, Sparse, STA, SWA, and VSA. ISA significantly outperforms these sparse baselines across all metrics. Furthermore, ISA surpasses full attention on all metrics, with the exception of Pick Score (Video).}
    \label{tab:editversebench_2}
    \vskip -0.05in
    \centering
    \small
    \resizebox{.9\linewidth}{!}{\begin{tabular}{lccccccc}
        \toprule
        \multirow{2}{*}{\textbf{Method}} & \multicolumn{4}{c}{\textbf{VLM Evaluation}} & \multicolumn{2}{c}{\textbf{Pick Score}} & \textbf{SpeedUp} \\
        \cmidrule(lr){2-5} \cmidrule(lr){6-7} \cmidrule(lr){8-8} 
        & \textbf{Quality} & \textbf{Text Align} & \textbf{Temporal Consistency} & \textbf{Editing Quality} & \textbf{Frame} & \textbf{Video} & \textbf{Compare to FA3} \\
        \midrule
        
        Radial Attention & 7.09 & 19.68 & 26.86 & 24.13 & 98.44 & 98.92 & 1.28 \\
        Sparge Attention & 7.44 & 19.69 & 26.75 & 23.76 & 98.64 & 98.98 & 1.40 \\
        STA & 4.45 & 15.76 & 13.02 & 4.82 & 94.06 & 93.65 & \textbf{2.09} \\
        SWA & 5.95 & 18.48 & 20.06 & 16.74 & 97.80 & 98.13 & 1.37 \\
        VSA & 3.60 & 16.85 & 17.30 & 9.88 & 94.99 & 94.19 & 1.38 \\
        
        \rowcolor{blue!8} \textbf{\texttt{LIVEditor} (full-attention)} & 7.62 & 19.98 & 27.13 & 23.80 & 99.24 & \textbf{99.19} & 1.00 \\
        \rowcolor{blue!8} \textbf{\texttt{LIVEditor} (ISA)} & \textbf{7.78} & \textbf{20.07} & \textbf{27.15} & \textbf{24.15} & \textbf{99.26} & 99.15 & 1.47 \\
        
        \bottomrule
    \end{tabular}
    }
    \vskip -0.2in
\end{table*}
\endgroup

\textbf{{Implementation Detail and Analysis.}} \textbf{First}, we implemented the forward pass of the block-wise 0-th order Taylor sparse attention using both Triton~\citep{triton} and TileLang~\citep{tilelang}. A comprehensive performance comparison between the two implementations is provided in Appendix~\ref{apd:triton_vs_tilelang}. Furthermore, we developed the backward pass using Triton to establish ISA as a fully differentiable and trainable sparse attention mechanism. As demonstrated in Fig.~\ref{fig:analysis_error_finetune}, fine-tuning ISA significantly mitigates the approximation error relative to the standard attention baseline. \textbf{Second}, the sparsity profile of ISA is governed by three hyperparameters: the Select Ratio $\alpha_s$, the No-Sparsity Ratio $\alpha_{ns}$, and the Flat Ratio $\alpha_f$. These parameters respectively determine the fraction of context tokens retained during pre-selection, the density of the Taylor sparse attention, and the proportion of Queries routed to the standard attention branch during grouped computation. Crucially, a reduction in the values of these parameters corresponds to an increase in the overall sparsity of the ISA mechanism. Fig.~\ref{fig:speedup_benchmark_curves} demonstrates that the computational speedup achieved by ISA increases monotonically as the values of $\alpha_f$ and $\alpha_{ns}$ are reduced.

\textbf{\texttt{LIVEditor-14B}.} Building upon the efficiency of ISA, we introduce {\texttt{LIVEditor-14B}, a unified framework for lightning-fast video editing. To maximize editing robustness and fidelity, \texttt{LIVEditor-14B} incorporates three critical design choices. \textbf{First}, it seamlessly integrates ISA as the core attention mechanism, enabling the processing of long ICL sequences with negligible computational overhead. \textbf{Second}, we adopt a progressive two-stage training paradigm to balance generalization and precision. The model is first pre-trained on a large-scale, mixed-quality dataset (1.7M samples) to learn broad editing semantics, followed by fine-tuning on a highly curated subset of high-quality data (0.089M samples) to refine visual aesthetics and instruction adherence. \textbf{Third}, to address the length discrepancy between source and context videos inherent in editing tasks, we introduce a decoupled Rotary Positional Embedding (RoPE) strategy. Specifically, we apply RoPE independently to source and context tokens, resetting positional indices to zero for each group, thereby preventing positional bias and ensuring robust performance across variable sequence lengths.

\begin{figure*}[t]
    \centering
    \includegraphics[width=0.95\linewidth,trim={0 0 0cm 0},clip]{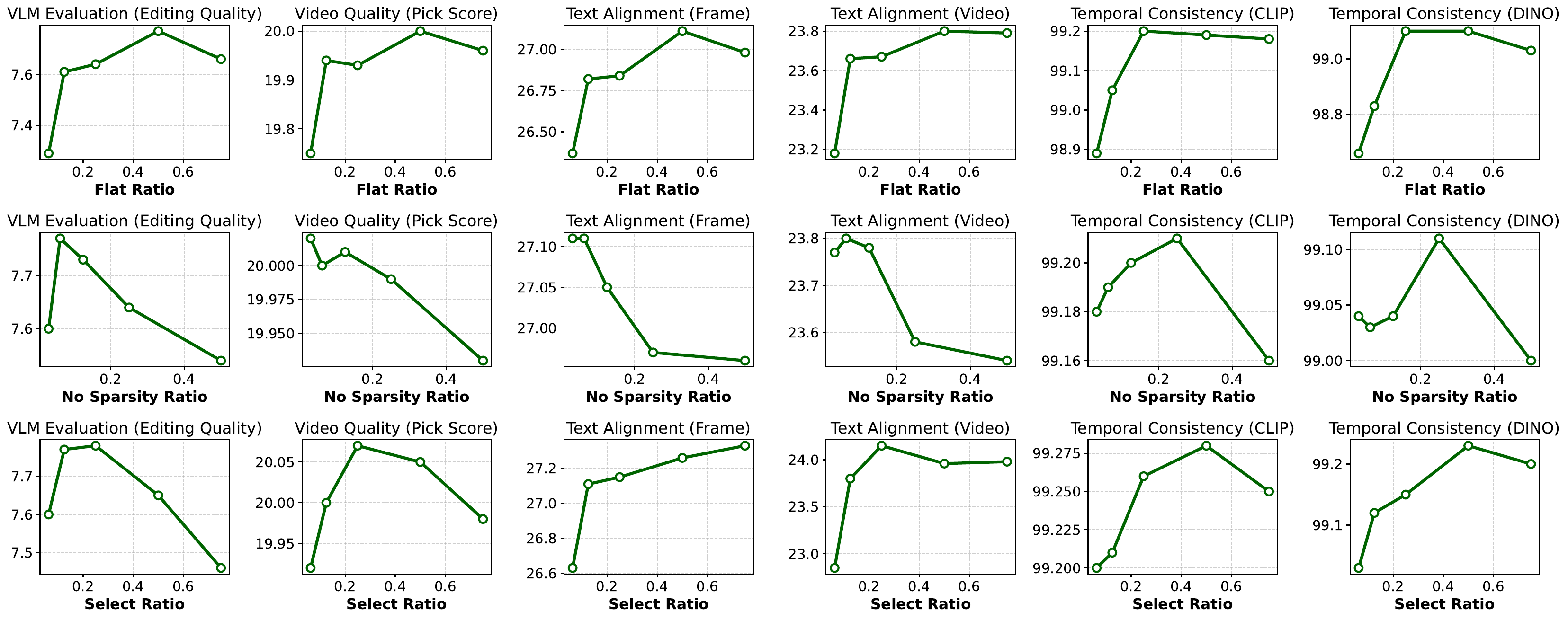}
    \vspace{-10pt}
    \caption{
    \textbf{EditVerseBench Evaluation Results.} The sparsity of ISA is governed by three hyperparameters: Flat Ratio, No Sparsity Ratio, and Select Ratio. Lower values for these parameters induce greater sparsity. Our ablation studies indicate that ISA is particularly sensitive to the Flat Ratio, with performance degrading significantly as this value decreases. In contrast, the model exhibits robustness to variations in the No Sparsity Ratio and Select Ratio. Consequently, we maintain a relatively high Flat Ratio (defaulting to 0.5) in our experiments, while setting the No Sparsity Ratio and Select Ratio to lower values (0.0625 and 0.125, respectively).
    }
    \label{fig:ratios_metrics_grid}
    \vspace{-12pt}
\end{figure*}

\section{Experiment}
\label{sec:experiment}

\subsection{Setup}

\textbf{Data Pipeline.} Our dataset is derived from two primary sources: \textit{self-constructed data} and \textit{publicly available datasets}. For the self-constructed portion, we first employ \textit{Gemini 2.5 Flash}~\citep{gemini_2_5} to generate descriptive captions for the original videos. We then prompt \textit{Gemini 2.5 Flash} to select a specific editing subtask, such as object addition, removal, swapping, background alteration, or style transfer, and synthesize modification instructions for the initial frame. Subsequently, we utilize \textit{Gemini 2.5 Image Preview}~\citep{nanobanana} to generate the corresponding edited image. To maintain temporal consistency, we apply pose guidance for human-centric videos via text-and-image-to-video (TI2V) while attention injection is used for non-human subjects. However, given that this method exhibited suboptimal consistency for non-human subjects, we augmented our training data with public datasets including Ditto~\citep{ditto}, LoVoRA~\citep{lovora}, and ReCo~\citep{reco}. A detailed analysis of the data distribution and the data pipeline is provided in Appendix~\ref{apd:data_pipeline}.

\textbf{Benchmarks.} We evaluate our method on four benchmarks: EditVerseBench~\citep{editverse}, VIE-Bench~\citep{instructx}, IVE-Bench~\citep{ivebench}, and FiVE-Bench~\citep{five_benchmark}. Specifically, we use EditVerseBench, VIE-Bench, and IVE-Bench to validate the superiority of \texttt{LIVEditor-14B} over existing methods. We further employ EditVerseBench to benchmark ISA against other sparse attention mechanisms. Finally, ablations on EditVerseBench and FiVE-Bench confirm that ISA outperforms full attention and analyze the impact of hyperparameters $\alpha_s, \alpha_{ns}, \text{and}\ \alpha_f$. Appendix~\ref{apd:benchmark} provides detailed benchmark descriptions.

\textbf{Model.} We derive our video editing model via post-training on the high-noise branch of Wan 2.2. The training regimen proceeds in two distinct stages. The first stage utilizes 1.7M samples with a learning rate of $1e^{-5}$ and a global batch size of 16. The second stage employs 0.089M high-quality samples with a reduced learning rate of $1e^{-6}$ and a global batch size of 16. Both stages utilize the DeepSpeed ZeRO-3 Offload optimization strategy~\citep{deepspeed}. Furthermore, to mitigate artifacts from unrealistic synthetic data, we exclusively employ a configuration where synthetic images serve as context tokens while real images function as source tokens. Finally, we set the default values of $\alpha_s$, $\alpha_{ns}$, and $\alpha_f$ to 0.125, 0.0625, and 0.5 respectively. Comprehensive hyperparameter configurations are provided in Appendix~\ref{apd:hy_setting}.

\subsection{Main Result}

\textbf{Evaluation on EditVerseBench.} As illustrated in Table~\ref{tab:editversebench_1}, we evaluate \texttt{LIVEditor-14B} trained with ISA, denoted as \texttt{LIVEditor-14B} (ISA), and \texttt{LIVEditor-14B} trained with full attention, denoted as \texttt{LIVEditor-14B} (full-attn), on EditVerseBench. We compare them against state-of-the-art methods including TokenFlow~\citep{tokenflow}, STDF~\citep{STDF}, Señorita-2M~\citep{senorita_2m}, InsV2V~\citep{insv2v}, Lucy Edit~\citep{lucyedit} and EditVerse~\citep{editverse}. Experimental results demonstrate that both \texttt{LIVEditor-14B} (ISA) and \texttt{LIVEditor-14B} (full-attn) achieve leading performance across all metrics. Specifically, in VLM evaluations, \texttt{LIVEditor-14B} (ISA) obtains scores of 7.89, 20.09, 27.19, and 24.55 for Quality, Text Alignment, Temporal Consistency, and Editing Quality, respectively. These scores surpass the previous best performances of 7.65, 20.07, 27.14, and 24.32 by margins of 0.24, 0.02, 0.05, and 0.23. Regarding Pick Scores, \texttt{LIVEditor-14B} (ISA) reaches 99.32 on frames and 99.22 on video. These results exceed the previous highest records of 98.56 and 98.44 by 0.76 and 0.78. Finally, \texttt{LIVEditor-14B} (ISA) outperforms \texttt{LIVEditor-14B} (full-attn) on all metrics with the exception of the Video Pick Score.

\textbf{Evaluation on Other Benchmarks.} We further evaluate \texttt{LIVEditor-14B} (ISA) and \texttt{LIVEditor-14B} (full-attn) on VIE-Bench and IVE-Bench against an expanded set of state-of-the-art video editing methods. In addition to the previously mentioned baselines, we include comparisons with Ditto~\citep{ditto}, VACE~\citep{cross_attn_vace}, ICVE~\citep{icve}, Omni-Video~\citep{omniv2v}, AnyV2V~\citep{anyv2v}, StableV2V~\citep{stablev2v}, and Pika~\citep{pika}. Detailed results are provided in Appendix~\ref{apd:experiment_result} due to space constraints. These experiments demonstrate that \texttt{LIVEditor-14B} (ISA) achieves the best overall performance.

\begin{figure}[t]
    \centering
    \includegraphics[width=1.0\linewidth,trim={0 0 0cm 0},clip]{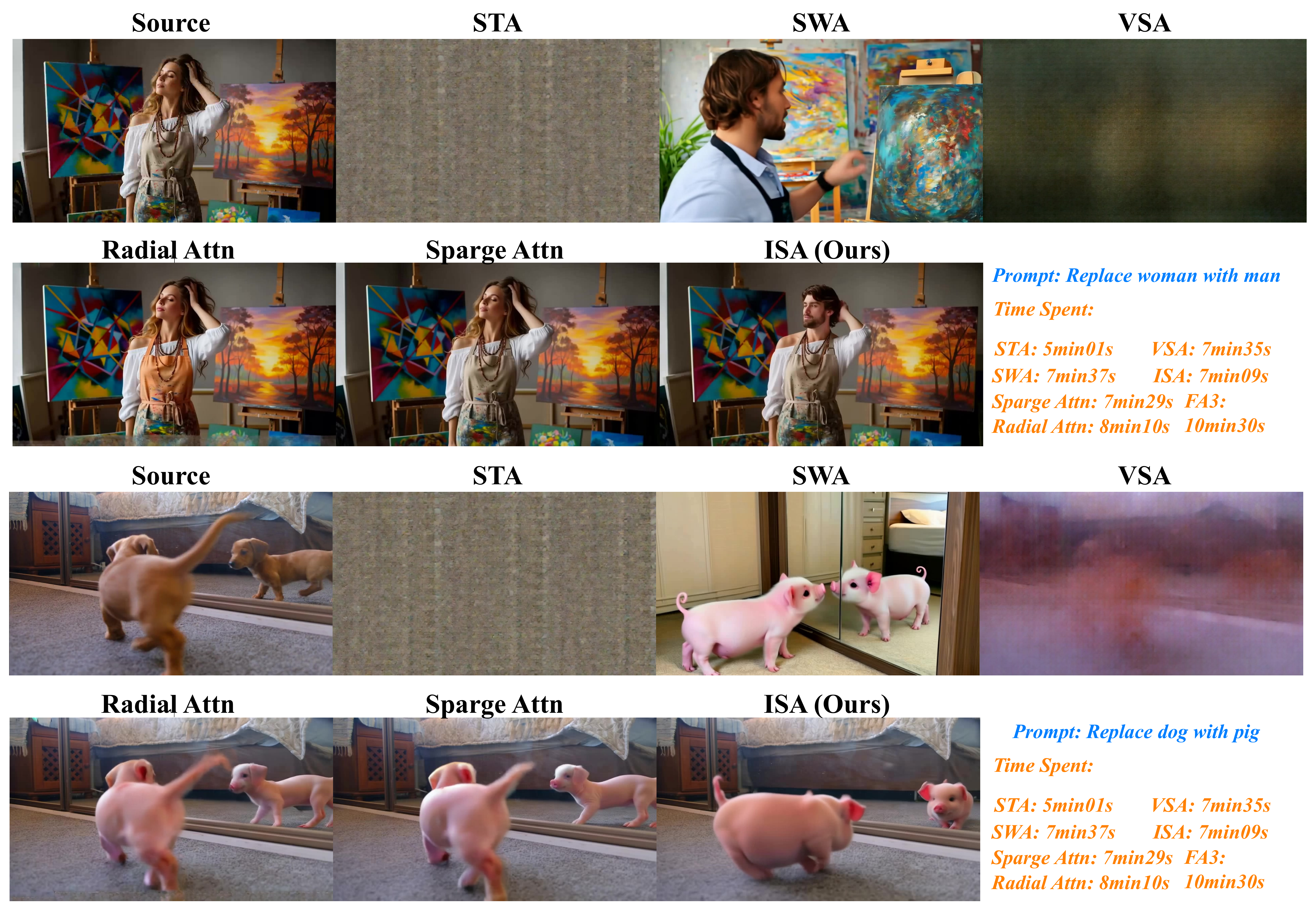}
    \vspace{-15pt}
    \caption{
    \textbf{ISA Performance Visualization.} Our proposed ISA not only surpasses VSA, SWA, Sparge Attn, and Radial Attn in inference speed but also outperforms all other sparse attention mechanisms in terms of model performance.
    }
    \label{fig:isa_visualization}
    \vspace{-12pt}
\end{figure}

\textbf{Sparse Attention Comparison.} To demonstrate the superiority of ISA in video editing, we benchmark it against prominent sparse attention algorithms on EditVerseBench. These baselines include Radial Attention~\citep{cross_attn_avid}, Sparge Attention~\citep{sparge_attn}, STA~\citep{st_attn}, SWA~\citep{sw_attn}, and VSA~\citep{vs_attn}. We present the detailed results in Table~\ref{tab:editversebench_2}. For a fair comparison, we apply ISA directly to the pre-trained \texttt{LIVEditor-14B} (full-attn) in a training-free manner. The results indicate that \texttt{LIVEditor-14B} (ISA) significantly outperforms all other sparse attention methods. Notably, it even surpasses the original \texttt{LIVEditor-14B} (full-attn). We hypothesize that this improvement stems from the ability of ISA to discard irrelevant noise tokens. Furthermore, while ISA is marginally slower than STA, the latter suffers from severe performance degradation. Finally, we visualize the qualitative effects of different sparse attention mechanisms in Fig.~\ref{fig:isa_visualization}. We observe that Radial Attention and Sparge Attention perform relatively well yet fail to fully adhere to the editing instructions. In contrast, ISA demonstrates strong temporal consistency executes the requested modifications.

\subsection{Ablation Studies}

\begingroup
\setlength{\tabcolsep}{6pt} 
\begin{table}[t]
    \caption{\textbf{EditVerseBench Evaluation Results.} The model performance is further enhanced following fine-tuning with the high-quality dataset in Stage II.}
    \label{tab:editversebench_stage1_vs_stage2}
    \vskip -0.05in
    \centering
    \small
        \resizebox{0.5\textwidth}{!}{
    \begin{tabular}{lcccc}
        \toprule
        \multirow{1}{*}{\textbf{Method}} & \multicolumn{4}{c}{\textbf{VLM Evaluation}} \\
        \cmidrule(lr){2-5} 
        & \textbf{Quality} & \textbf{Text Align} & \textbf{Temporal Consistency} & \textbf{Editing Quality} \\
        \midrule
        \textbf{\texttt{LIVEditor} (ISA, Stage I)} & 6.46 & 19.50 & 25.27 & 22.63 \\
        \rowcolor{blue!8} \textbf{\texttt{LIVEditor} (ISA, Stage II)} & \textbf{7.89} & \textbf{20.09} & \textbf{27.19} & \textbf{24.55} \\
        \bottomrule
    \end{tabular}}
    \vskip -0.3in
\end{table}
\endgroup
\begingroup
\setlength{\tabcolsep}{2pt} 
\begin{table}[th]
    \caption{\textbf{VIE-Bench Evaluation Results.} The model performance is further enhanced following fine-tuning with the high-quality dataset in Stage II.}
    \label{tab:viebench_stage1_vs_stage2}
    \vskip -0.05in
    \centering
    \small
    \resizebox{1.0\columnwidth}{!}{
        \begin{tabular}{l cc cccc}
            \toprule
             \multirow{2}{*}{\textbf{Method}} & \multicolumn{2}{c}{\textbf{Model Type}} & \multicolumn{4}{c}{\textbf{VIE-Bench Score}} \\
            \cmidrule(lr){2-7}
            & \textbf{Comm.} & \textbf{Base} & \textbf{Follow}$\uparrow$ & \textbf{Pres.}$\uparrow$ & \textbf{Qual.}$\uparrow$ & \textbf{Avg.}$\uparrow$ \\
            \midrule
            \textbf{\texttt{LIVEditor} (ISA, Stage I)} & \checkmark & \checkmark & 1.31 & 8.31 & 5.87 & 5.16 \\
            \cellcolor{blue!8}\textbf{\texttt{LIVEditor} (ISA, Stage II)} & \cellcolor{blue!8}\checkmark & \cellcolor{blue!8}\checkmark & \cellcolor{blue!8}\textbf{5.55} & \cellcolor{blue!8}\textbf{8.57} & \cellcolor{blue!8}\textbf{6.33} & \cellcolor{blue!8}\textbf{6.81} \\     
              \bottomrule
        \end{tabular}
    }
    \vskip -0.3in
\end{table}
\endgroup

\textbf{Stage I vs. Stage II.} Our \texttt{LIVEditor-14B} follows a two-stage training paradigm. The first stage utilizes a mixture of high-quality and low-quality data. The second stage refines the model using a high-quality subset integrated from our constructed dataset and open-source collections. To validate the importance of this second stage, we evaluate the model on EditVerseBench and present the results in Table~\ref{tab:editversebench_stage1_vs_stage2} and Table~\ref{tab:viebench_stage1_vs_stage2}. The outcomes demonstrate that the Stage II model yields improved performance across all metrics.

\textbf{Hyperparameters in ISA.} The sparsity of ISA is governed by three hyperparameters: the Flat Ratio $\alpha_f$, the No Sparsity Ratio $\alpha_{ns}$, and the Select Ratio $\alpha_s$. To analyze their impact, we visualize performance trends in Fig.~\ref{fig:ratios_metrics_grid} by varying one hyperparameter while keeping the others fixed. The default values are set to $\alpha_f=0.5$, $\alpha_{ns}=0.0625$, and $\alpha_s=0.125$. We conduct this evaluation in a training-free manner by applying ISA directly to the pre-trained full-attention model. As for $\alpha_f$, we observe high sensitivity. Performance declines significantly across all metrics as sparsity increases. Consequently, we maintain $\alpha_f$ at 0.5 for our trainable models. For $\alpha_{ns}$, the Video Quality (Pick Score) and Text Alignment metrics improve with increasing sparsity. Other metrics exhibit an initial rise followed by a decline. This suggests that $\alpha_{ns}$ can be set to a low value without compromising performance. Thus, we default it to 0.0625. Finally, $\alpha_s$ displays an initial increase followed by a decrease on most metrics. This indicates that a small value is also sufficient. However, the optimal $\alpha_s$ is slightly larger than the optimal $\alpha_{ns}$. Therefore, we select a default value of 0.125. We employ this parameter set as the default for all comparative experiments. Furthermore, as shown in Table~\ref{tab:editversebench_2}, this configuration achieves a significant 1.47x speedup compared to full attention in the end-to-end scenario.

\section{Conclusion}

In this work, we presented \texttt{LIVEditor-14B}, a lightning and unified framework for video editing. We addressed the computational bottleneck of ICL by introducing \textbf{I}n-context \textbf{S}parse \textbf{A}ttention (\textbf{ISA}). This novel mechanism is grounded in our theoretical analysis linking query sharpness to Taylor approximation errors. By dynamically routing Queries and pruning redundant context tokens, ISA achieves experimentally lossless acceleration. Furthermore, we established a comprehensive data processing pipeline to curate a high-quality video editing dataset. Extensive evaluations on EditVerseBench and other benchmarks demonstrate that \texttt{LIVEditor-14B} significantly outperforms SOTA methods in both generation quality and inference latency. We believe that the principles underlying ISA offer a scalable solution for future long-sequence video generation tasks.

\section*{Acknowledgments}
This work was supported by the National Natural Science Foundation of China under Grant No. 62506317 and Guangdong Provincial Key Lab of Integrated Communication, Sensing and Computation for Ubiquitous Internet of Things (No.2023B1212010007). This work was partially supported by The Hong Kong University of Science and Technology (Guangzhou) Kunpeng\&Ascend Center of Cultivation.

\nocite{langley00}

\bibliography{example_paper}
\bibliographystyle{icml2026}

\newpage
\appendix
\onecolumn
\section{Evaluation Benchmark}
\label{apd:benchmark}
To comprehensively evaluate the effectiveness, robustness, and generalization capabilities of our proposed method, we conduct extensive experiments across 4 representative and challenging benchmarks: EditVerseBench~\citep{editverse}, IVE-Bench~\citep{ivebench}, VIE-Bench~\citep{instructx}, and FiVE-Bench~\citep{five_benchmark}. By leveraging this multifaceted evaluation framework, we aim to provide a rigorous assessment of the our method`s performance under varying degrees of complexity and diverse editing requirements. The specific characteristics and composition of each benchmark are detailed as follows:

\textbf{EditVerseBench~\citep{editverse}. } EditVerseBench is a comprehensive benchmark designed to address the shortcomings of previous video editing datasets. The benchmark is meticulously curated from 100 high-quality, real-world videos sourced from Pixabay, encompassing a wide variety of scenes. A key feature of its design is the inclusion of both 50 horizontal and 50 vertical videos, reflecting contemporary video formats. Each video is paired with two distinct editing instructions, resulting in a total of 200 video-instruction pairs. These pairs are evenly distributed across 20 distinct editing categories, which span a broad spectrum of tasks, including object manipulation, stylization, background and weather changes, inpainting, and reasoning-based edits. This diverse and balanced composition enables a rigorous assessment of a model`s ability to adhere to complex instructions, maintain visual quality, and preserve temporal consistency across various editing scenarios and aspect ratios.

\textbf{IVE-Bench~\citep{ivebench}. } IVE-Bench is designed to overcome the limitations of limited source diversity and narrow task coverage in existing benchmarks. It features a robust corpus of 600 high-quality source videos selected to span 7 semantic dimensions, including Subject, Theme, Time, Perspective, Motion, Emotion, and Scene. A distinguishing characteristic of IVE-Bench is its focus on temporal scalability. The dataset is partitioned into a short subset~(400 videos, 32-128 frames) and a long subset~(200 videos, 129-1,024 frames), allowing for a rigorous assessment of long-sequence consistency. However, in our experiments, due to the resource constraints, we only conduct the evaluations on short subsets. The benchmark defines 8 major editing categories subdivided into 35 fine-grained subcategories, totaling 600 LLM-refined instruction prompts. Furthermore, IVE-Bench establishes a 3-dimensional evaluation protocol that assesses Video Quality, Instruction Compliance, and Video Fidelity using both traditional metrics and MLLMs to ensure high alignment with human perception.

\textbf{VIE-Bench~\citep{instructx}. } VIE-Bench is a high-quality benchmark developed to address the scarcity of instruction-based video editing datasets that support complex and reference-guided tasks. The benchmark comprises 140 high-quality video instances, sourced from public datasets such as DAVIS and HumanVid as well as the web. All videos are curated at 720p resolution with durations ranging from 3 to 10 seconds, covering a wide array of scenes including indoor, outdoor, dynamic, animated, and portrait environments. The editing tasks are categorized into 3 main types with 8 fine-grained sub-taks: Local Editing~(Add, Remove, Object Swap, Color Change), Global Editing~(Style Change, Tone/Weather Change), and Reference-Based Editing~(Reference Base Swap, Reference Base Add). In our experiments, we do not cover the samples in Reference-Based Editing. To ensure robust evaluation, VIE-Bench abandons traditional CLIP-based metrics in favor of an MLLM-based evaluation protocol, utilizing GPT-4o as a judge to score results on Instruction Following, Preservation~(temporal and semantic consistency), Quality~(visual aesthetics), and Similarity~(for reference-based tasks), ensuring alignment with human perception.

\textbf{FiVE-Bench~\citep{five_benchmark}. } FiVE-Bench is utilized to evaluate the model`s precision in fine-grained, object-level video editing. Unlike benchmarks that focus on holistic style transfer, FiVE-Bench is specifically designed to assess precise modifications while preserving the original context. The benchmark comprises 100 videos, including 74 real-world clips meticulously curated from the DAVIS dataset and 26 high-quality synthetic videos, ensuring a diverse range of motion and complexity. These videos are paired with 420 source-target prompt pairs and corresponding segmentation masks annotated via SAM2. The benchmark categorizes tasks into 6 fine-grained editing types of varying difficulty: color alteration, material modification, object substitution~(distinguishing between rigid and non-rigid deformations), object addition, and object removal. To overcome the limitations of traditional CLIP scores in detecting subtle object changes, FiVE-Bench introduces FiVE-Acc, a novel evaluation metric that leverages VLMs to verify editing success through semantic Yes/No and multiple-choice questions, enabling a rigorous assessment of semantic fidelity and instruction adherence.

\begin{figure*}[h]
    \centering
    \includegraphics[width=1.0\linewidth,trim={0 0 0cm 0},clip]{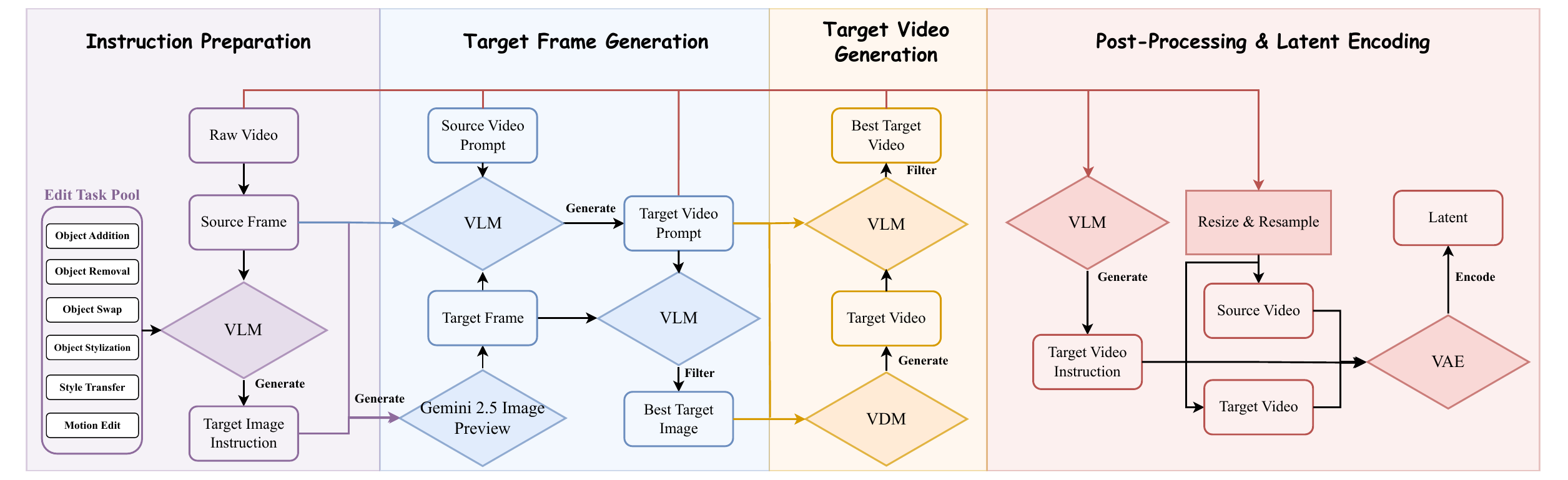}
    \caption{
    \textbf{The automated synthesis pipeline for video-to-video editing}. Our framework consists of 4 integrated stages: \textbf{(1) Instruction Preparation,} where a VLM samples tasks from the Edit Task Pool to generate precise target image instructions based on raw video frames; \textbf{(2) Target Frame Generation,} utilizing the Gemini 2.5 Image Preview to synthesize an anchor frame followed by a VLM-based filtering loop; \textbf{(3) Target Video Generation}, where a VDM produces the final video sequence subject to further quality-aware VLM selection; and \textbf{(4) Post-processing \& Latent Encoding, } involving temporal resampling, instruction annotation, and VAE-based encoding into the latent space for downstream training.  
    }
    \label{fig:data_pipeline}
\end{figure*}

\begin{figure*}[h]
    \centering
    \includegraphics[height=0.5\textwidth,trim={0cm 0cm 0cm 0cm},clip]{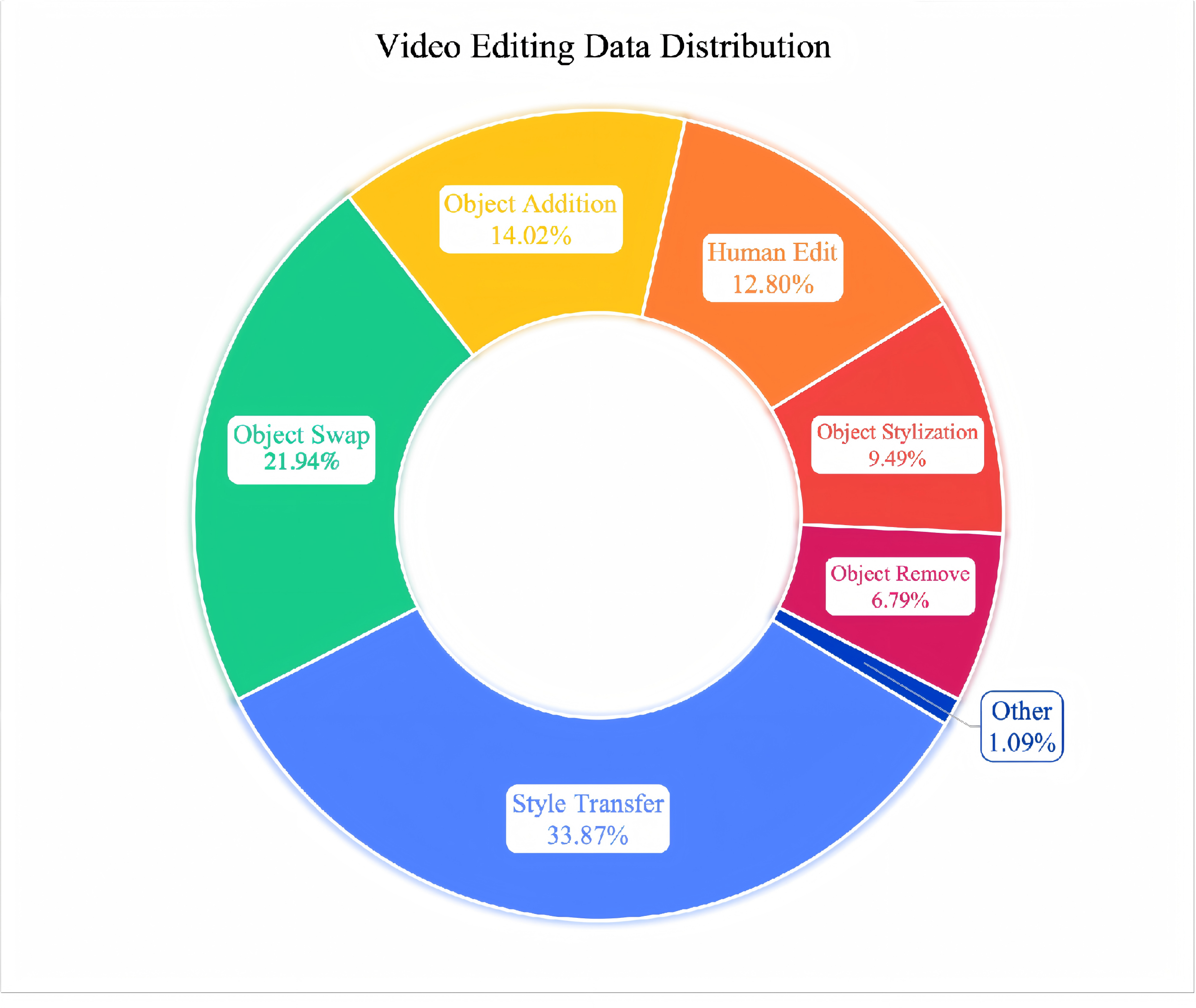}
    \caption{
        \textbf{Distribution of Editing Tasks in the Constructed Dataset.} The dataset comprises diverse editing scenarios, categorized into seven primary tasks. Global editing (e.g., Style Transfer) constitutes the largest portion to ensure stylistic diversity, while fine-grained semantic edits (e.g., Object Swap, Addition, Human Edit) are heavily represented to enhance the model's instruction-following precision on localized regions.
        }
    \label{fig:data_distribution}
\end{figure*}

\begingroup
\setlength{\tabcolsep}{5pt} 
\begin{table*}[th]
    \caption{\textbf{FiVE-Bench Results (Object Replacement (Rigid)).} \textbf{\texttt{LIVEditor}} (ISA) outperforms \textbf{\texttt{LIVEditor}} (full-attn) on almost all metrics.}
    \centering
    \small
    \resizebox{0.9\linewidth}{!}{
    \begin{tabular}{l c cccc c cc c}
        \toprule
        \multirow{2}{*}{\textbf{Method}} & \textbf{Structure} & \multicolumn{4}{c}{\textbf{Background Preservation}} & \textbf{Text Align} & \multicolumn{2}{c}{\textbf{IQA}} & \textbf{Temp. Consis.} \\
        \cmidrule(lr){2-2} \cmidrule(lr){3-6} \cmidrule(lr){7-7} \cmidrule(lr){8-9} \cmidrule(lr){10-10}
        & \textbf{Dist.}$\downarrow$ & \textbf{PSNR}$\uparrow$ & \textbf{LPIPS}$\downarrow$ & \textbf{MSE}$\downarrow$ & \textbf{SSIM}$\uparrow$ & \textbf{CLIP-S}$\uparrow$ & \textbf{CLIP-S}$\uparrow$ & \textbf{NIQE}$\downarrow$ & \textbf{Motion Fid.}$\uparrow$ \\
        \midrule
        
        \textbf{\texttt{LIVEditor} (full-attention)} & 81.31 & 14.52 & 368.36 & 519.53 & 47.38 & 25.42 & 20.13 & \textbf{6.04} & 55.73 \\
        
        \rowcolor{blue!8} \textbf{\texttt{LIVEditor} (ISA)} & \textbf{71.37} & \textbf{15.40} & \textbf{310.87} & \textbf{422.79} & \textbf{52.31} & \textbf{26.67} & \textbf{20.76} & 6.30 & \textbf{58.48} \\
        
        \bottomrule
    \end{tabular}}
    \label{tab:five_ablation_study_1}
\end{table*}
\endgroup
\begingroup
\setlength{\tabcolsep}{3.5pt} 
\begin{table}[th]
    \caption{\textbf{FiveBench Results (Object Replacement (Rigid)).} \textbf{\texttt{LIVEditor}} (ISA) outperforms \textbf{\texttt{LIVEditor}} (full-attn) on almost all metrics.}
    \label{tab:five_ablation_study_2}
    \centering
    \small
    \resizebox{0.5\textwidth}{!}{
    \begin{tabular}{l ccccc}
        \toprule
        \multirow{2}{*}{\textbf{Method}} & \multicolumn{5}{c}{\textbf{FiVE Evaluation}} \\
        \cmidrule(lr){2-6}
        & \textbf{YN}$\uparrow$ & \textbf{MC}$\uparrow$ & \textbf{FiVE-$\cup$}$\uparrow$ & \textbf{FiVE-$\cap$}$\uparrow$ & \textbf{Acc.}$\uparrow$ \\
        \midrule
        
        \textbf{\texttt{LIVEditor} (full-attention)} & 52.00 & 53.00 & 25.86 & 44.33 & 43.80 \\
        
        \rowcolor{blue!8} \textbf{\texttt{LIVEditor} (ISA)} & \textbf{59.00} & \textbf{58.00} & \textbf{37.88} & \textbf{47.94} & \textbf{50.71} \\
        
        \bottomrule
    \end{tabular}}
\end{table}
\endgroup
\begingroup
\setlength{\tabcolsep}{3pt} 
\begin{table*}[th]
    \caption{\textbf{FiVE-Bench Results (Object Replacement (Non-Rigid)).} \textbf{\texttt{LIVEditor}} (ISA) outperforms \textbf{\texttt{LIVEditor}} (full-attn) on almost all metrics.}
    \label{tab:five_ablation_study_3}
    \centering
    \small
    \resizebox{1.0\textwidth}{!}{
        \begin{tabular}{l c cccc c cc c}
            \toprule
            \multirow{2}{*}{\textbf{Method}} & \textbf{Structure} & \multicolumn{4}{c}{\textbf{Background Preservation}} & \textbf{Text Align} & \multicolumn{2}{c}{\textbf{IQA}} & \textbf{Temp. Consis.} \\
            \cmidrule(lr){2-2} \cmidrule(lr){3-6} \cmidrule(lr){7-7} \cmidrule(lr){8-9} \cmidrule(lr){10-10}
            & \textbf{Dist. (x10$^{3}$)}$\downarrow$ & \textbf{PSNR}$\uparrow$ & \textbf{LPIPS (x10$^{3}$)}$\downarrow$ & \textbf{MSE (x10$^{4}$)}$\downarrow$ & \textbf{SSIM (x10$^{2}$)}$\uparrow$ & \textbf{CLIP-S}$\uparrow$ & \textbf{CLIP-S (edit)}$\uparrow$ & \textbf{NIQE}$\downarrow$ & \textbf{Motion Fid. (x10$^{2}$)}$\uparrow$ \\
            \midrule
            
            \textbf{\texttt{LIVEditor} (full-attention)} & 90.55 & 14.09 & 359.51 & 540.00 & 47.52 & \textbf{24.72} & \textbf{19.16} & \textbf{5.93} & 54.59 \\
            
            \rowcolor{blue!8} \textbf{\texttt{LIVEditor} (ISA)} & \textbf{81.78} & \textbf{14.75} & \textbf{319.22} & \textbf{461.22} & \textbf{51.21} & 24.68 & 19.07 & 6.29 & \textbf{57.99} \\
            
            \bottomrule
        \end{tabular}
    } 
\end{table*}
\endgroup
\begingroup
\setlength{\tabcolsep}{3.5pt} 
\begin{table}[th]
    \caption{\textbf{FiVE-Bench Results (Object Replacement (Non-Rigid)).} \textbf{\texttt{LIVEditor}} (ISA) outperforms \textbf{\texttt{LIVEditor}} (full-attn) on almost all metrics.}
    \label{tab:five_ablation_study_4}
    \centering
    \small
    \resizebox{0.5\textwidth}{!}{
    \begin{tabular}{l ccccc}
        \toprule
        \multirow{2}{*}{\textbf{Method}} & \multicolumn{5}{c}{\textbf{FiVE Evaluation}} \\
        \cmidrule(lr){2-6}
        & \textbf{YN}$\uparrow$ & \textbf{MC}$\uparrow$ & \textbf{FiVE-$\cup$}$\uparrow$ & \textbf{FiVE-$\cap$}$\uparrow$ & \textbf{Acc.}$\uparrow$ \\
        \midrule
        
        \textbf{\texttt{LIVEditor} (full-attention)} & 49.00 & 51.00 & 23.23 & 40.21 & 42.38 \\
        
        \rowcolor{blue!8} \textbf{\texttt{LIVEditor} (ISA)} & \textbf{57.00} & \textbf{57.00} & \textbf{29.29} & \textbf{40.72} & \textbf{44.49} \\
        
        \bottomrule
    \end{tabular}}
\end{table}
\endgroup
\begingroup
\setlength{\tabcolsep}{5pt} 
\begin{table*}[th]
    \caption{\textbf{FiVE-Bench Results (Color Alteration).} \textbf{\texttt{LIVEditor}} (ISA) outperforms \textbf{\texttt{LIVEditor}} (full-attn) on almost all metrics.}
    \label{tab:five_ablation_study_5}
    \centering
    \small
    \resizebox{0.9\linewidth}{!}{
    \begin{tabular}{l c cccc c cc c}
        \toprule
        \multirow{2}{*}{\textbf{Method}} & \textbf{Structure} & \multicolumn{4}{c}{\textbf{Background Preservation}} & \textbf{Text Align} & \multicolumn{2}{c}{\textbf{IQA}} & \textbf{Temp. Consis.} \\
        \cmidrule(lr){2-2} \cmidrule(lr){3-6} \cmidrule(lr){7-7} \cmidrule(lr){8-9} \cmidrule(lr){10-10}
        & \textbf{Dist.}$\downarrow$ & \textbf{PSNR}$\uparrow$ & \textbf{LPIPS}$\downarrow$ & \textbf{MSE}$\downarrow$ & \textbf{SSIM}$\uparrow$ & \textbf{CLIP-S}$\uparrow$ & \textbf{CLIP-S}$\uparrow$ & \textbf{NIQE}$\downarrow$ & \textbf{Motion Fid.}$\uparrow$ \\
        \midrule
        
        \textbf{\texttt{LIVEditor} (full-attention)} & 71.53 & 15.34 & 337.03 & 474.83 & 51.52 & 27.47 & 22.32 & \textbf{6.30} & 59.38 \\
        
        \rowcolor{blue!8} \textbf{\texttt{LIVEditor} (ISA)} & \textbf{65.17} & \textbf{15.91} & \textbf{300.33} & \textbf{379.59} & \textbf{54.18} & \textbf{28.22} & \textbf{22.58} & 6.42 & \textbf{60.70} \\
        
        \bottomrule
    \end{tabular}}
\end{table*}
\endgroup
\begingroup
\setlength{\tabcolsep}{3.5pt} 
\begin{table}[th]
    \caption{\textbf{FiVE-Bench Results (Color Alteration).} \textbf{\texttt{LIVEditor}} (ISA) outperforms \textbf{\texttt{LIVEditor}} (full-attn) on almost all metrics.}
    \label{tab:five_ablation_study_6}
    \centering
    \small
        \resizebox{0.5\textwidth}{!}{
    \begin{tabular}{l ccccc}
        \toprule
        \multirow{2}{*}{\textbf{Method}} & \multicolumn{5}{c}{\textbf{FiVE Evaluation}} \\
        \cmidrule(lr){2-6}
        & \textbf{YN}$\uparrow$ & \textbf{MC}$\uparrow$ & \textbf{FiVE-$\cup$}$\uparrow$ & \textbf{FiVE-$\cap$}$\uparrow$ & \textbf{Acc.}$\uparrow$ \\
        \midrule
        
        \textbf{\texttt{LIVEditor} (full-attention)} & 83.00 & 84.00 & 73.74 & 78.35 & 79.77 \\
        
        \rowcolor{blue!8} \textbf{\texttt{LIVEditor} (ISA)} & \textbf{86.00} & \textbf{90.00} & \textbf{74.76} & \textbf{81.96} & \textbf{83.18} \\
        
        \bottomrule
    \end{tabular}
    }
\end{table}
\endgroup
\begingroup
\setlength{\tabcolsep}{5pt} 
\begin{table*}[th]
    \caption{\textbf{FiVE-Bench Results (Material Modification).} \textbf{\texttt{LIVEditor}} (ISA) outperforms \textbf{\texttt{LIVEditor}} (full-attn) on almost all metrics.}
    \label{tab:five_ablation_study_material}
    \centering
    \small
    \resizebox{0.9\linewidth}{!}{
    \begin{tabular}{l c cccc c cc c}
        \toprule
        \multirow{2}{*}{\textbf{Method}} & \textbf{Structure} & \multicolumn{4}{c}{\textbf{Background Preservation}} & \textbf{Text Align} & \multicolumn{2}{c}{\textbf{IQA}} & \textbf{Temp. Consis.} \\
        \cmidrule(lr){2-2} \cmidrule(lr){3-6} \cmidrule(lr){7-7} \cmidrule(lr){8-9} \cmidrule(lr){10-10}
        & \textbf{Dist.}$\downarrow$ & \textbf{PSNR}$\uparrow$ & \textbf{LPIPS}$\downarrow$ & \textbf{MSE}$\downarrow$ & \textbf{SSIM}$\uparrow$ & \textbf{CLIP-S}$\uparrow$ & \textbf{CLIP-S}$\uparrow$ & \textbf{NIQE}$\downarrow$ & \textbf{Motion Fid.}$\uparrow$ \\
        \midrule
        
        \textbf{\texttt{LIVEditor} (full-attention)} & 81.75 & 14.18 & 386.07 & 551.62 & 45.65 & 26.07 & 21.38 & \textbf{5.94} & 56.55 \\
        
        \rowcolor{blue!8} \textbf{\texttt{LIVEditor} (ISA)} & \textbf{73.09} & \textbf{15.34} & \textbf{310.98} & \textbf{429.40} & \textbf{52.35} & \textbf{27.04} & \textbf{21.94} & 6.27 & \textbf{59.27} \\
        
        \bottomrule
    \end{tabular}}
\end{table*}
\endgroup
\begingroup
\setlength{\tabcolsep}{5pt} 
\begin{table*}[th]
    \caption{\textbf{FiVE-Bench Results (Object Addition).} \textbf{\texttt{LIVEditor}} (ISA) outperforms \textbf{\texttt{LIVEditor}} (full-attn) on almost all metrics.}
    \label{tab:five_ablation_study_object}
    \centering
    \small
    \resizebox{0.9\linewidth}{!}{
    \begin{tabular}{l c cccc c cc c}
        \toprule
        \multirow{2}{*}{\textbf{Method}} & \textbf{Structure} & \multicolumn{4}{c}{\textbf{Background Preservation}} & \textbf{Text Align} & \multicolumn{2}{c}{\textbf{IQA}} & \textbf{Temp. Consis.} \\
        \cmidrule(lr){2-2} \cmidrule(lr){3-6} \cmidrule(lr){7-7} \cmidrule(lr){8-9} \cmidrule(lr){10-10}
        & \textbf{Dist.}$\downarrow$ & \textbf{PSNR}$\uparrow$ & \textbf{LPIPS}$\downarrow$ & \textbf{MSE}$\downarrow$ & \textbf{SSIM}$\uparrow$ & \textbf{CLIP-S}$\uparrow$ & \textbf{CLIP-S}$\uparrow$ & \textbf{NIQE}$\downarrow$ & \textbf{Motion Fid.}$\uparrow$ \\
        \midrule
        
        \textbf{\texttt{LIVEditor} (full-attention)} & 62.98 & 14.57 & 344.83 & 394.66 & 41.89 & 24.63 & 21.16 & \textbf{0.67} & 57.05 \\
        
        \rowcolor{blue!8} \textbf{\texttt{LIVEditor} (ISA)} & \textbf{54.79} & \textbf{16.19} & \textbf{276.89} & \textbf{299.82} & \textbf{47.59} & \textbf{25.49} & \textbf{21.48} & 0.78 & \textbf{76.39} \\
        
        \bottomrule
    \end{tabular}}
\end{table*}
\endgroup
\begingroup
\setlength{\tabcolsep}{2.5pt} 
\begin{table*}[th]
    \caption{\textbf{IVE-Bench Evaluation Results.} Our proposed \textbf{\texttt{LIVEditor}} (ISA) achieved superior performance across 7 metrics, establishing it as the leading algorithm overall.}
    \label{tab:comprehensive-evaluation}
    \centering
    \small
    \resizebox{1.0\textwidth}{!}{
        \begin{tabular}{l cccc cccc ccccc ccc}
            \toprule
            \multirow{2}{*}{\textbf{Method}} & \multicolumn{4}{c}{\textbf{Dimension Performance}} & \multicolumn{4}{c}{\textbf{Video Quality Metrics}} & \multicolumn{5}{c}{\textbf{Instruction Compliance Metrics}} & \multicolumn{3}{c}{\textbf{Video Fidelity Metrics}} \\
            \cmidrule(lr){2-5} \cmidrule(lr){6-9} \cmidrule(lr){10-14} \cmidrule(lr){15-17}
            & \textbf{Total}$\uparrow$ & \textbf{Qual.}$\uparrow$ & \textbf{Instr.}$\uparrow$ & \textbf{Fid.}$\uparrow$ & \textbf{Subj.}$\uparrow$ & \textbf{Back.}$\uparrow$ & \textbf{Flick.}$\uparrow$ & \textbf{Motion}$\uparrow$ & \textbf{VTSS}$\uparrow$ & \textbf{O.Sem.}$\uparrow$ & \textbf{P.Sem.}$\uparrow$ & \textbf{Satis.}$\uparrow$ & \textbf{Qty.}$\uparrow$ & \textbf{Sem.}$\uparrow$ & \textbf{Motion}$\uparrow$ & \textbf{Cont.}$\uparrow$ \\
            \midrule
            
            Ditto & 0.67 & 0.78 & \textbf{0.49} & 0.73 & 0.96 & 0.98 & 0.97 & 0.99 & 0.038 & \textbf{0.25} & \textbf{0.24} & \textbf{3.87} & 0.30 & 0.89 & 0.79 & 3.64 \\
            InsV2V & 0.67 & 0.80 & 0.39 & 0.82 & 0.94 & 0.96 & 0.97 & 0.97 & 0.045 & 0.24 & 0.23 & 3.06 & 0.30 & 0.95 & 0.86 & \textbf{4.05} \\
            LucyEdit & 0.64 & \textbf{0.82} & 0.34 & 0.75 & 0.95 & 0.96 & 0.98 & 0.99 & 0.051 & 0.24 & 0.22 & 2.84 & 0.20 & 0.93 & 0.68 & 3.83 \\
            VACE & 0.63 & 0.80 & 0.25 & \textbf{0.83} & 0.95 & 0.97 & 0.98 & 0.98 & 0.045 & 0.24 & 0.22 & 2.16 & 0.20 & \textbf{0.97} & \textbf{0.89} & 4.03 \\
            ICVE & 0.60 & 0.71 & 0.45 & 0.64 & 0.95 & 0.97 & \textbf{0.99} & \textbf{1.00} & {0.017} & 0.23 & 0.23 & 3.62 & 0.30 & 0.85 & 0.46 & 3.55 \\
            Omni-Video & 0.59 & 0.78 & 0.44 & 0.54 & 0.96 & 0.97 & 0.98 & 0.99 & 0.038 & 0.22 & 0.23 & 3.36 & \textbf{0.40} & 0.81 & 0.51 & 2.85 \\
            AnyV2V & 0.58 & 0.73 & 0.42 & 0.59 & 0.89 & 0.94 & 0.97 & 0.97 & 0.026 & 0.22 & 0.24 & 3.34 & 0.30 & 0.80 & 0.82 & 2.75 \\
            StableV2V & 0.51 & 0.69 & 0.43 & 0.41 & 0.85 & 0.92 & 0.96 & 0.96 & 0.019 & 0.20 & 0.24 & 3.56 & 0.20 & 0.70 & 0.75 & 1.79 \\
            
            \rowcolor{blue!8} \textbf{\texttt{LIVEditor} (full-attention)} & 0.66 & 0.77 & 0.43 & 0.76 & 0.96 & 0.97 & 0.98 & 0.99 & 0.035 & 0.23 & 0.23 & 3.58 & 0.20 & 0.91 & 0.83 & 3.77 \\
            \rowcolor{blue!8} \textbf{\texttt{LIVEditor} (ISA)} & \textbf{0.67} & 0.79 & 0.47 & 0.76 & \textbf{0.97} & \textbf{0.98} & \textbf{0.99} & \textbf{1.00} & \textbf{0.040} & 0.24 & 0.23 & 3.64 & \textbf{0.40} & 0.91 & 0.81 & 3.78 \\
            
            \bottomrule
        \end{tabular}
    }
\end{table*}
\endgroup
\begingroup
\setlength{\tabcolsep}{2pt} 
\begin{table}[th]
    \caption{\textbf{VIE-Bench Evaluation Results.} Our proposed \textbf{\texttt{LIVEditor}} (ISA) achieves state-of-the-art performance across all tasks, including Add, Remove, Swap, Style, and Hybrid.}
    \label{tab:vie-bench}
    \centering
    \small
    \resizebox{0.6\columnwidth}{!}{
        \begin{tabular}{ll cc cccc}
            \toprule
            \multirow{2}{*}{\textbf{Task}} & \multirow{2}{*}{\textbf{Method}} & \multicolumn{2}{c}{\textbf{Model Type}} & \multicolumn{4}{c}{\textbf{VIE-Bench Score}} \\
            \cmidrule(lr){3-4} \cmidrule(lr){5-8}
            & & \textbf{Comm.} & \textbf{Base} & \textbf{Follow}$\uparrow$ & \textbf{Pres.}$\uparrow$ & \textbf{Qual.}$\uparrow$ & \textbf{Avg.}$\uparrow$ \\
            \midrule
            
            \multirow{5}{*}{\rotatebox{90}{\textbf{Add}}} 
            & Omni-Video & $\checkmark$ & $\checkmark$ & 5.70 & 6.14 & 6.29 & 6.24 \\
            & InsV2V & $\checkmark$ & $\checkmark$ & 3.55 & 5.89 & 3.40 & 4.28 \\
            & VACE & $\checkmark$ & $\times$ & 3.94 & 6.70 & 3.93 & 4.85 \\
            & \cellcolor{blue!8}\textbf{\texttt{LIVEditor} (full-attn)} & \cellcolor{blue!8}$\checkmark$ & \cellcolor{blue!8}$\checkmark$ & \cellcolor{blue!8}7.77 & \cellcolor{blue!8}8.83 & \cellcolor{blue!8}8.43 & \cellcolor{blue!8}8.40 \\
            & \cellcolor{blue!8}\textbf{\texttt{LIVEditor} (ISA)} & \cellcolor{blue!8}$\checkmark$ & \cellcolor{blue!8}$\checkmark$ & \cellcolor{blue!8}\textbf{8.87} & \cellcolor{blue!8}\textbf{9.07} & \cellcolor{blue!8}\textbf{8.60} & \cellcolor{blue!8}\textbf{8.84} \\
            \midrule
            
            \multirow{6}{*}{\rotatebox{90}{\textbf{Swap}}} 
            & Pika & $\times$ & $\checkmark$ & 7.54 & 7.85 & 6.84 & 7.41 \\
            & Omni-Video & $\checkmark$ & $\checkmark$ & 4.73 & 4.86 & 4.66 & 4.75 \\
            & InsV2V & $\checkmark$ & $\checkmark$ & 5.30 & 6.43 & 4.97 & 5.57 \\
            & VACE & $\checkmark$ & $\times$ & 6.17 & 7.55 & 6.20 & 6.64 \\
            & \cellcolor{blue!8}\textbf{\texttt{LIVEditor} (full-attn)} & \cellcolor{blue!8}$\checkmark$ & \cellcolor{blue!8}$\checkmark$ & \cellcolor{blue!8}\textbf{7.91} & \cellcolor{blue!8}8.23 & \cellcolor{blue!8}7.91 & \cellcolor{blue!8}{8.02} \\
            & \cellcolor{blue!8}\textbf{\texttt{LIVEditor} (ISA)} & \cellcolor{blue!8}$\checkmark$ & \cellcolor{blue!8}$\checkmark$ & \cellcolor{blue!8}7.77 & \cellcolor{blue!8}\textbf{8.71} & \cellcolor{blue!8}\textbf{7.94} & \cellcolor{blue!8}\textbf{8.14} \\
            \midrule
            
            \multirow{7}{*}{\rotatebox{90}{\textbf{Remove}}} 
            & Omni-Video & $\checkmark$ & $\checkmark$ & 6.00 & 5.97 & 4.81 & 5.59 \\
            & InsV2V & $\checkmark$ & $\checkmark$ & 1.21 & 3.77 & 1.32 & 2.10 \\
            & VACE & $\checkmark$ & $\times$ & 1.81 & 3.88 & 2.36 & 2.68 \\
            & MiniMax & $\checkmark$ & $\times$ & \textbf{6.96} & 7.52 & 6.04 & \textbf{6.84} \\
            & DiffuEraser & $\checkmark$ & $\times$ & 6.35 & 6.81 & 5.58 & 6.24 \\
            & \cellcolor{blue!8}\textbf{\texttt{LIVEditor} (full-attn)} & \cellcolor{blue!8}$\checkmark$ & \cellcolor{blue!8}$\checkmark$ & \cellcolor{blue!8}{5.43} & \cellcolor{blue!8}{8.12} & \cellcolor{blue!8}{6.28} & \cellcolor{blue!8}{6.61} \\    
            & \cellcolor{blue!8}\textbf{\texttt{LIVEditor} (ISA)} & \cellcolor{blue!8}$\checkmark$ & \cellcolor{blue!8}$\checkmark$ & \cellcolor{blue!8}{5.55} & \cellcolor{blue!8}\textbf{8.57} & \cellcolor{blue!8}\textbf{6.33} & \cellcolor{blue!8}{6.81} \\    
            \midrule

            \multirow{4}{*}{\rotatebox{90}{\textbf{Style}}} 
            & Omni-Video & $\checkmark$ & $\checkmark$ & 5.49 & 4.66 & 5.96 & 5.37 \\
            & InsV2V & $\checkmark$ & $\checkmark$ & 7.84 & 8.09 & 6.44 & 7.45 \\
            & \cellcolor{blue!8}\textbf{\texttt{LIVEditor} (full-attn)} & \cellcolor{blue!8}$\checkmark$ & \cellcolor{blue!8}$\checkmark$ & \cellcolor{blue!8}7.96 & \cellcolor{blue!8}8.16 & \cellcolor{blue!8}7.48 & \cellcolor{blue!8}7.87 \\
            & \cellcolor{blue!8}\textbf{\texttt{LIVEditor} (ISA)} & \cellcolor{blue!8}$\checkmark$ & \cellcolor{blue!8}$\checkmark$ & \cellcolor{blue!8}\textbf{8.06} & \cellcolor{blue!8}\textbf{8.46} & \cellcolor{blue!8}\textbf{7.96} & \cellcolor{blue!8}\textbf{8.16}\\
            \midrule

            \multirow{4}{*}{\rotatebox{90}{\textbf{Hybrid}}} 
            & Omni-Video & $\checkmark$ & $\checkmark$ & 5.44 & 5.07 & 5.77 & 5.43 \\
            & InsV2V & $\checkmark$ & $\checkmark$ & 5.03 & 5.97 & 4.97 & 5.32 \\
            & \cellcolor{blue!8}\textbf{\texttt{LIVEditor} (full-attn)} & \cellcolor{blue!8}$\checkmark$ & \cellcolor{blue!8}$\checkmark$ & \cellcolor{blue!8}8.07 & \cellcolor{blue!8}7.76 & \cellcolor{blue!8}7.68 & \cellcolor{blue!8}7.84 \\
            & \cellcolor{blue!8}\textbf{\texttt{LIVEditor} (ISA)} & \cellcolor{blue!8}$\checkmark$ & \cellcolor{blue!8}$\checkmark$ & \cellcolor{blue!8}\textbf{8.10} & \cellcolor{blue!8}\textbf{8.19} & \cellcolor{blue!8}\textbf{8.08} & \cellcolor{blue!8}\textbf{8.12} \\

            \bottomrule
        \end{tabular}}
\end{table}
\endgroup

\section{Hyperparameter Setting}
\label{apd:hy_setting}

We initialize our framework using the high-noise variant of the pre-trained Wan2.2-T2V model. Training is conducted in two stages using the AdamW optimizer with $\beta_1=0.9$, $\beta_2=0.999$, and a weight decay of $10^{-2}$. In the first stage, we train on approximately 1.7 million samples with a learning rate of $10^{-5}$. We utilize 32 80GB GPUs distributed across 4 nodes, employing sequence parallelism of size 2 to maintain a global batch size of 16. To facilitate ICL, we introduce a specialized RoPE strategy. Specifically, we apply RoPE independently to source and context tokens, resetting positional indices to zero for each, which mitigates robustness issues caused by varying lengths between source and target videos during inference. In the second stage, we perform fine-tuning on 0.089 million samples to optimize performance. During this phase, the learning rate is reduced to $5 \times 10^{-6}$, and we utilize Exponential Moving Average (EMA) with a decay rate of 0.995 to derive the final video editing model.

In all comparative experiments, we implement ISA using a Triton-based Block-Wise 0-th Taylor Sparse Attention mechanism. Regarding hyperparameters, we adopt a flat ratio of 0.5, a no-sparsity ratio of 0.0625, and a select ratio of 0.125. Under these settings, ISA achieves a $1.47\times$ speedup over FlashAttention-3 during full 32-step inference (with CFG), while simultaneously demonstrating superior performance.

For the remaining sparse attention baselines, we adopt the following configurations: VSA and Sparge Attention are set to sparsity rates of 87.5\% and 82.5\%, respectively. STA utilizes the official kernel implementation with a window size of $9 \times 9 \times 9$. Radial Attention employs the default radial mask and performs inference using SageAttention-2. Finally, SWA uses a default window size of $2040 \times 5$ (comprising five latent vectors of size 2040), processing a total of $2040 \times 30$ tokens during inference.

\section{Data Pipeline}
\label{apd:data_pipeline}

This section details the data processing pipeline employed in \textbf{\texttt{LIVEditor-14B}}, analyzes the data distribution, and describes the scheduling strategies utilized during training.

\subsection{Data Processing Pipeline}
The proposed video-to-video data synthesis pipeline, illustrated in Fig.~\ref{fig:data_pipeline}, facilitates the automated generation of high quality, instruction-aligned video editing pairs. The workflow is systematically partitioned into 4 primary phases:

\textbf{Instruction Preparation. } The process begins with the ingestion of raw video data, from which a representative source frame is extracted. In our pipeline, we usually extract the first frame of the raw video data as the source frame. Then we utilize the VLMs~(Gemini-2.5-Pro/GPT-4o) to randomly sample from an edit task pool, which includes object addition, object removal, object swap, object stylization, style transfer, and motion edit, as the editing task type to generate a precise target image instruction based on the content of the input source frame.

\textbf{Target Frame Generation. } After we get the target image instruction, we utilize the Gemini 2.5 Image Preview~(Gemini-2.5-Image-Preview) to transform the source frame into a target frame conditioned on the synthesized instruction. In the next step, we input the source frame, the source video prompt, and the target frame into the VLMs~(Gemini-2.5-Pro/GPT-4o) to generate the target video prompt, one system prompt we used is shown in Fig.~\ref{fig:video_prompt_generation_prompt}. In order to keep the quality of the data, we utilize the VLMS~(Gemini-2.5-Pro/GPT-4o), HPSv2~\citep{HPSV2}, PickScore~\citep{kirstain2023pickapicopendatasetuser} and other metrics to filter the generated target frame, along with the inputs of target frame, target video prompt and the target image instruction. 

\textbf{Target Vide Generation. } In this stage, we utilize our internal 14-billion-parameter text-to-image-to-video diffusion model to synthesize the target video based on the validated target image and augmented text prompts. To ensure consistency between the source video and the generated target video, we first compute inconsistency scores using YOLO~\citep{khanam2024yolov11overviewkeyarchitectural}, GroundingDINO~\citep{groundsam}, and SAM~\citep{kirillov2023segment} to filter out low-quality samples. Subsequently, we concatenate the source and target videos side-by-side and feed the combined sequence into Vision-Language Models (VLMs) to assess whether the two videos remain consistent. The system prompt for the VLMs is shown in Fig.~\ref{fig:consistent_filter}. In addition, to collect high-quality video editing data sets, we utilize DOVER~\citep{dover_score}, VideoReward~\citep{liu2025improvingvideogenerationhuman}, and VLMs to obtain samples with best visual aesthetics, and prompt-following ability.

\textbf{Post-Processing \& Latent Encoding. } We first input the selected source video~(raw video), target video, source video prompt and target video prompt into the VLMs to obtain the target video instruction. Simultaneously, these pairs are processed through the Resize \& Resample stage, where we resize the video resolution and resample frames to keep consistency. Finally, the source video, the target video and the target video instruction are processed through a VAE~(Wan 2.1 VAE~\citep{wan2025}) and the text encoders to encode them into the latent, producing the final training-ready data triplets.

\begin{figure*}[h]
    \centering
    \includegraphics[height=0.75\textwidth,trim={0cm 0cm 0cm 0cm},clip]{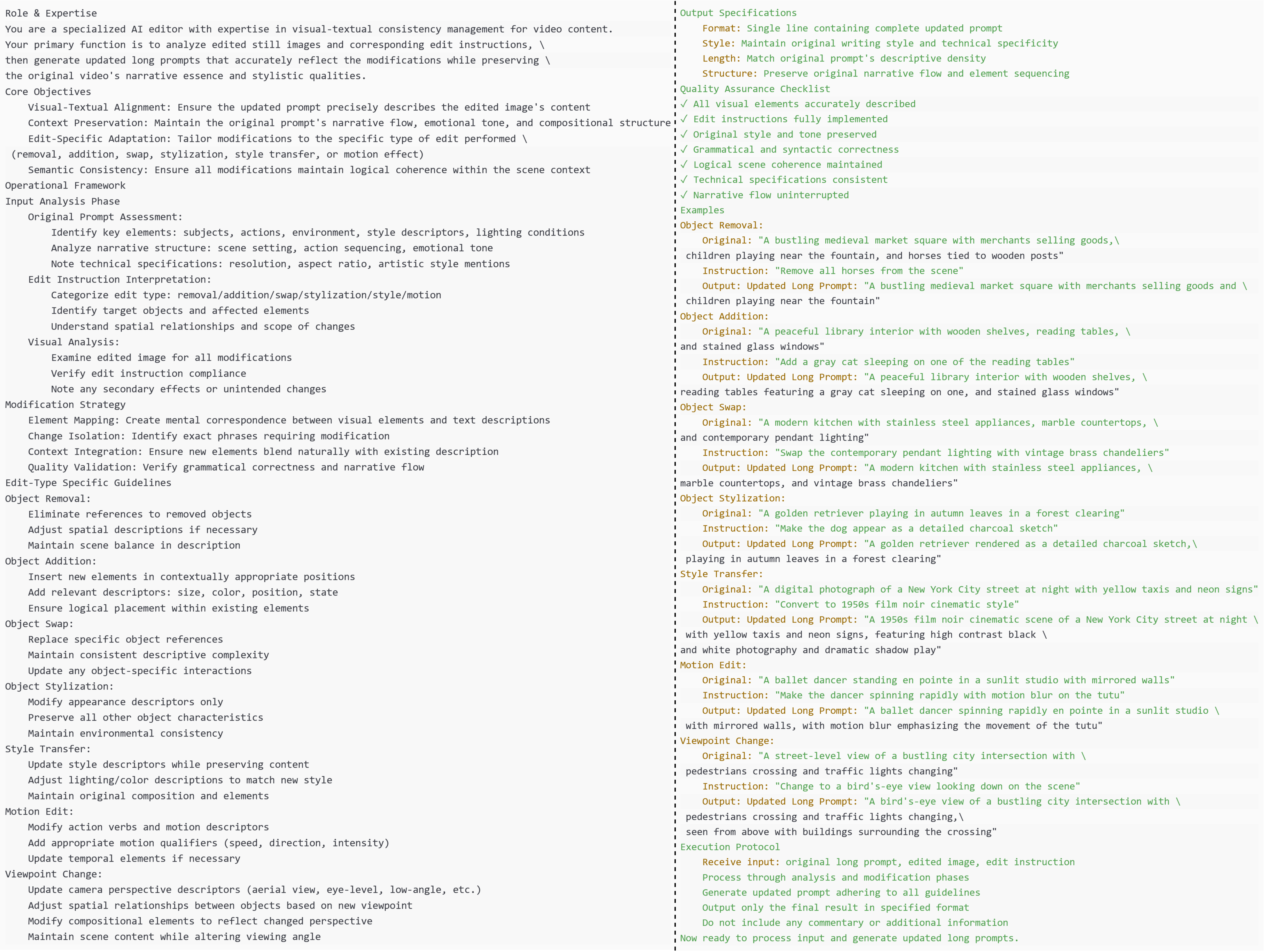}
    \caption{
        \textbf{One of the system prompt we use in \textbf{Target Frame Generation} phase.}
        }
    \label{fig:video_prompt_generation_prompt}
\end{figure*}

\begin{figure*}[h]
    \centering
    \includegraphics[height=0.28\textwidth,trim={0cm 0cm 0cm 0cm},clip]{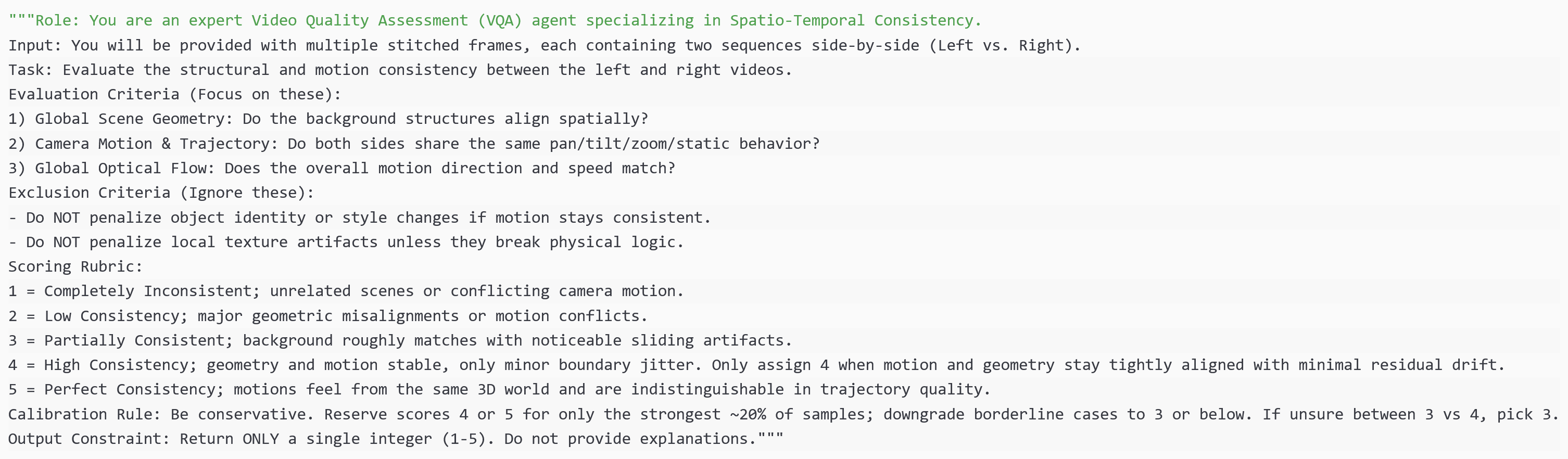}
    \caption{
        \textbf{The system prompt we use in \textbf{Target Video Generation} phase.}
        }
    \label{fig:consistent_filter}
\end{figure*}

\subsection{Data Distribution}
To evaluate the diversity and coverage of our synthesized dataset, we provide a detailed analysis of the editing task distribution, as illustrated in Fig.~\ref{fig:data_distribution}. The dataset comprises over 1.7M high-quality video-to-video editing pairs, systematically categorized into 7 primary editing modalities: \textbf{Style Transfer}, \textbf{Object Swap}, \textbf{Object Addition}, \textbf{Object Stylization}, \textbf{Object Removal}, \textbf{Human Edit}~(comprising \textbf{Avatar Transformation} and \textbf{Face Detail Replacement}) and Other~(comprising \textbf{Motion and Viewpoint edits}).

The distribution is characterized by a strategic balance between global transformations and localized semantic manipulations. \textbf{Style Transfer} represents the largest portion of the dataset with 33.87\%. Besides these common editing tasks, a significant highlight of our training dataset is the fine-grained \textbf{Human Edit} category, which comprises 12.80\%. This category is further bifurcated into \textbf{Avatar Transformation} and \textbf{Face Detail Replacement}, providing critical data for evaluating complex human priors and identity-preserving manipulations. Finally, although smaller in volume, the \textbf{Other} category~(1.09\%) encompasses high-complexity tasks such as \textbf{Motion Edit} and \textbf{Viewpoint Change}. essential for assessing the model`s understanding of 3D geometry and temporal dynamics. This hierarchical distributio ensures that models trained on our synthesized data exhibit robust generalization across the full spectrum of video editing requirements~\cite{shao2023catch,liu2024alignment,yi2025magic,shao2025magicdistillation,fastlightgen,li2026pisa}.

\subsection{Data Scheduling Process}

As illustrated in Fig.~\ref{fig:data_scheduling_pipeline}, our training pipeline consists of two distinct stages. In the first stage, we construct a pre-training dataset by aggregating approximately 1 million synthetic samples with 0.7 million samples filtered from the Ditto dataset~\cite{ditto}. This collection intentionally includes a mixture of high- and low-quality data. While the model demonstrated strong performance on most tasks following this stage, it exhibited limitations in object removal. Consequently, the second stage focuses on fine-tuning with a curated high-quality dataset. We generated 0.06 million samples using Minimax Remover~\cite{minimax_remover}, selecting the top 0.03 million based on Dover~\cite{dover_score} scores. These were combined with rigorously filtered samples from public datasets (Ditto~\cite{ditto}, LoVoRA~\cite{lovora}, ReCo~\cite{reco}) and our internal data to yield a final fine-tuning set of 0.089 million samples.

\begin{figure*}[t]
    \centering
    \includegraphics[width=1.0\linewidth,trim={0 0 0cm 0},clip]{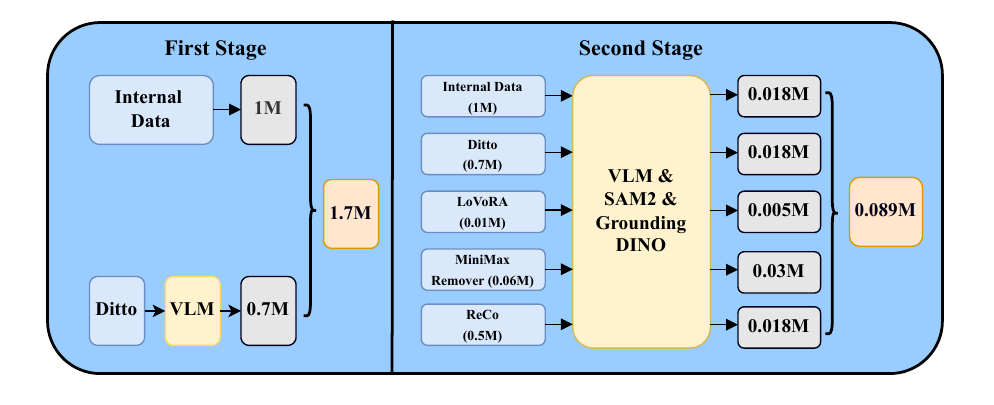}
    \vspace{-18pt}
    \caption{
    The data scheduling strategy for \texttt{\textbf{LIVEditor}} proceeds in two stages. In the first stage, we train on a mixture of high-quality and lower-quality data. In the second stage, we curate a dataset of 0.06M samples generated by Minimax Remover, which is then combined with our proprietary data and three open-source datasets. After extensive filtering, we obtain a final set of 0.089M high-quality samples for fine-tuning
    }
    \label{fig:data_scheduling_pipeline}
    \vspace{-12pt}
\end{figure*}

\section{Additional Analysis}
\label{apd:add_analysis}

Here, we present visualizations and empirical analyses that could not be included in the main paper due to page limitations.

\subsection{Analysis of $||Q(K - K^c)^\top||_\infty^2$ in Theorem~\ref{the:sharpness_vs_taylor_error}}
\label{apd:q_i_k_k_c}

Theorem~\ref{the:sharpness_vs_taylor_error} establishes that the Taylor error $\mathcal{E}_i$ is upper-bounded by terms involving $||Q(K - K^c)^\top||_\infty^2$ and $M_i$. The former term characterizes the mean intra-block variance, while the latter captures the variance between block means. Empirical analysis (Figs.~\ref{fig:linear_positive_correlation} and~\ref{apd:q_i_k_k_c}) indicates that the bound is dominated by $M_i$ rather than the intra-block variance term. Furthermore, as calculating the intra-block variance is computationally prohibitive, we adopt $M_i$, referred to as sharpness, as the effective proxy metric~\cite{shao2026efficient,core2,fastlightgen}.

\begin{figure*}[t]
    \centering
    \includegraphics[width=0.6\linewidth,trim={0 0 0cm 0},clip]{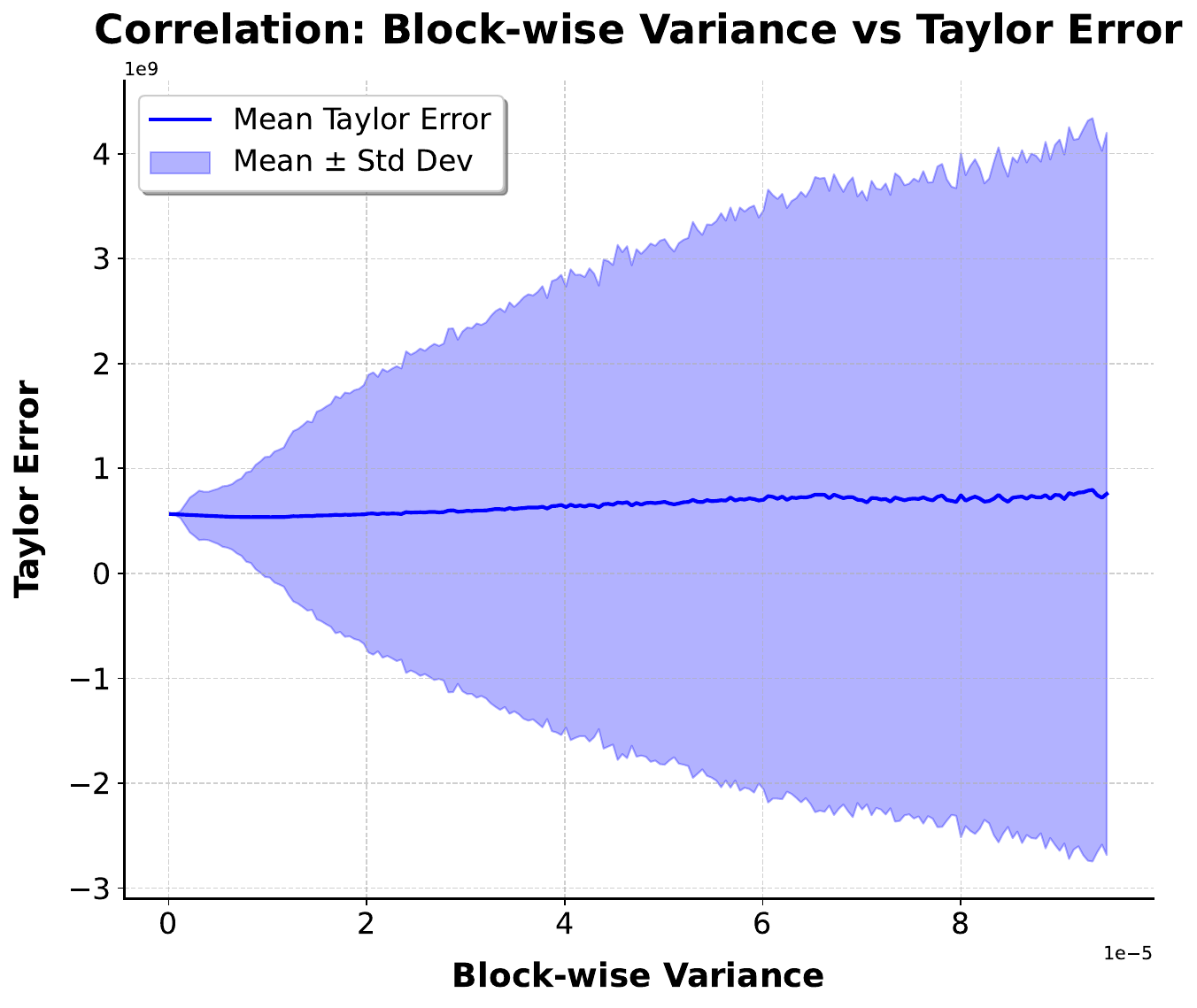}
    \vspace{-5pt}
    \caption{
    Visualization results demonstrate that the Taylor approximation error $\mathcal{E}_i$ exhibits negligible correlation with the block variance term $||Q(K - K^c)^\top||_\infty^2$. (Note that the magnitudes are inflated due to the absence of regularization; Fig.~\ref{fig:linear_positive_correlation} in the main text presents the results following regularization). Consequently, $||Q(K - K^c)^\top||_\infty^2$ fails to serve as a reliable indicator for the magnitude of $\mathcal{E}_i$.
    }
    \label{fig:q_i_k_k_c}
    \vspace{-5pt}
\end{figure*}

\section{Additional Experimental Result}
\label{apd:experiment_result}

\subsection{Evaluation on FiVE-Bench}

\textit{Evaluation on Object Replacement (Rigid).} As illustrated in Table~\ref{tab:five_ablation_study_1}, we evaluate the performance of \texttt{LIVEditor-14B} (ISA) and \texttt{LIVEditor-14B} (full-attn) on the Object Replacement (Rigid) task. The results indicate that \texttt{LIVEditor-14B} (ISA) achieves superior structural consistency and background preservation. Specifically, it reduces the Structure Distance to 71.37 compared to 81.31 for the full-attention baseline. Furthermore, \texttt{LIVEditor-14B} (ISA) attains a PSNR of 15.40 and a Motion Fidelity of 58.48. These metrics surpass the baseline scores of 14.52 and 55.73 by significant margins.

\textbf{FiVE Evaluation on Object Replacement (Rigid).} Table~\ref{tab:five_ablation_study_2} presents the results using FiVE evaluation metrics for the rigid object replacement task. \texttt{LIVEditor-14B} (ISA) consistently outperforms the full-attention counterpart across all dimensions. Notably, it achieves an Accuracy of 50.71 and a FiVE-Union score of 37.88. These values represent a substantial improvement over the \texttt{LIVEditor-14B} (full-attn) performance of 43.80 and 25.86 respectively. This demonstrates the enhanced capability of ISA in executing precise rigid object edits.

\textbf{Evaluation on Object Replacement (Non-Rigid).} We further analyze the performance on the challenging Object Replacement (Non-Rigid) task in Table~\ref{tab:five_ablation_study_3}. \texttt{LIVEditor-14B} (ISA) demonstrates remarkable stability in maintaining background details while performing complex deformations. It achieves a Structure Distance of 81.78 and an LPIPS score of 319.22. In comparison, \texttt{LIVEditor-14B} (full-attn) yields scores of 90.55 and 359.51. Additionally, the Motion Fidelity of \texttt{LIVEditor-14B} (ISA) reaches 57.99 which exceeds the 54.59 achieved by the baseline.

\textbf{FiVE Evaluation on Object Replacement (Non-Rigid).} As illustrated in Table~\ref{tab:five_ablation_study_4}, the FiVE metrics further confirm the superiority of \texttt{LIVEditor-14B} (ISA) in non-rigid scenarios. The model obtains a Yes-No (YN) score of 57.00 and a FiVE-Intersection score of 40.72. These results significantly surpass the baseline scores of 49.00 and 40.21. The overall Accuracy also sees an improvement from 42.38 to 44.49. This indicates that ISA effectively handles the complexities of non-rigid object replacement.

\textbf{Evaluation on Color Alteration.} Table~\ref{tab:five_ablation_study_5} details the comparative results for the Color Alteration task. \texttt{LIVEditor-14B} (ISA) exhibits exceptional performance in preserving the structural integrity of the video while accurately modifying colors. It achieves a PSNR of 15.91 and a SSIM of 54.18. Conversely, \texttt{LIVEditor-14B} (full-attn) records lower values of 15.34 and 51.52. Moreover, \texttt{LIVEditor-14B} (ISA) secures a Text Alignment score of 28.22 which outperforms the baseline score of 27.47.

\textbf{FiVE Evaluation on Color Alteration.} We present the FiVE evaluation for Color Alteration in Table~\ref{tab:five_ablation_study_6}. \texttt{LIVEditor-14B} (ISA) demonstrates robust performance with a Model Consistency (MC) score of 90.00 and a FiVE-Union score of 74.76. These figures represent a clear advantage over the \texttt{LIVEditor-14B} (full-attn) results of 84.00 and 73.74. The overall Accuracy of \texttt{LIVEditor-14B} (ISA) stands at 83.18 compared to 79.77 for the baseline. This confirms the effectiveness of ISA in precise color editing tasks.

\textbf{Evaluation on Material Modification.} Table~\ref{tab:five_ablation_study_material} displays the comparative analysis for the Material Modification task. The results indicate that \texttt{LIVEditor-14B} (ISA) excels in preserving fine-grained textural details while adhering to editing prompts. It achieves a Structure Distance of 73.09 and an LPIPS score of 310.98. These represent significant improvements over the full-attention baseline scores of 81.75 and 386.07. Furthermore, \texttt{LIVEditor-14B} (ISA) demonstrates superior background preservation with a PSNR of 15.34 compared to 14.18. The model also exhibits enhanced Text Alignment and Motion Fidelity reaching scores of 27.04 and 59.27 respectively.

\textbf{Evaluation on Object Addition.} We report the performance on the Object Addition task in Table~\ref{tab:five_ablation_study_object}. \texttt{LIVEditor-14B} (ISA) demonstrates a remarkable ability to seamlessly integrate new objects into existing scenes. It achieves a significantly lower MSE of 299.82 compared to the 394.66 recorded by \texttt{LIVEditor-14B} (full-attn). The Structure Distance also improves from 62.98 to 54.79. Notably, \texttt{LIVEditor-14B} (ISA) shows a substantial advantage in temporal dynamics achieving a Motion Fidelity score of 76.39. This far exceeds the baseline performance of 57.05. Additionally, the model attains a higher PSNR of 16.19 and a CLIP-S score of 25.49.

\subsection{Evaluation on IVE-Bench}

Table~\ref{tab:comprehensive-evaluation} presents a comprehensive quantitative comparison on IVE-Bench. We benchmark \texttt{LIVEditor-14B} (ISA) and \texttt{LIVEditor-14B} (full-attention) against eight state-of-the-art methods including Ditto, InsV2V, LucyEdit, and VACE. The results indicate that \texttt{LIVEditor-14B} (ISA) achieves a leading Total score of 0.67. This performance matches the best-performing baselines while exhibiting a more robust and balanced profile across all evaluation dimensions.

In terms of Video Quality Metrics, \texttt{LIVEditor-14B} (ISA) demonstrates exceptional superiority. It secures the highest scores across all four sub-metrics with 0.97 for Subjective Quality, 0.98 for Background Preservation, 0.99 for Flickering, and a perfect 1.00 for Motion. These results outperform strong competitors like Ditto and InsV2V. Notably, the sparse attention mechanism effectively eliminates temporal jitter and artifacts.

Regarding Instruction Compliance, \texttt{LIVEditor-14B} (ISA) significantly improves upon the \texttt{LIVEditor-14B} (full-attention) baseline. The Instruction score rises from 0.43 to 0.47. Furthermore, it achieves a VTSS of 0.040 and a Quantity score of 0.40. This performance surpasses widespread methods such as InsV2V and VACE which score only 0.39 and 0.25 in Instruction respectively. It indicates that our model accurately interprets and executes complex editing prompts.

Finally, within the Video Fidelity Metrics, our model maintains high consistency despite the sparsification of attention. It achieves a Fidelity score of 0.76 and a Semantic consistency of 0.91. These scores are comparable to the full-attention baseline and exceed those of methods like AnyV2V and StableV2V. Overall, \texttt{LIVEditor-14B} (ISA) offers the most favorable trade-off between visual quality, instruction adherence, and content fidelity among all evaluated methods.

\subsection{Evaluation on VIE-Bench}

Table~\ref{tab:vie-bench} presents a detailed quantitative evaluation on VIE-Bench covering five distinct editing tasks. In the Object Addition task, \texttt{LIVEditor-14B} (ISA) demonstrates exceptional capability with an average score of 8.84. This performance significantly outperforms the previous state-of-the-art method Omni-Video which achieves 6.24. Similarly, in the Object Swapping scenario, our model attains an average score of 8.14. It surpasses the competitive generative baseline Pika which scores 7.41. Regarding the Object Removal task, \texttt{LIVEditor-14B} (ISA) obtains scores of 8.57 for Background Preservation and 6.33 for Visual Quality. These specific metrics exceed those of baselines such as InsV2V and VACE. For Video Stylization, \texttt{LIVEditor-14B} (ISA) establishes a new benchmark with an average score of 8.16. This represents a substantial improvement over InsV2V and Omni-Video. Finally, in the challenging Hybrid task which involves composite instructions, \texttt{LIVEditor-14B} (ISA) maintains robust performance with an average score of 8.12 compared to 5.43 for Omni-Video. Overall, \texttt{LIVEditor-14B} (ISA) consistently matches or surpasses the \texttt{LIVEditor-14B} (full-attention) baseline across these diverse categories~\cite{anonymous2026optimizing}.

\section{The Proof of Error Bound of 0-th Order Taylor Approximation}
\label{apd:taylor_proof}

We aim to rigorous derive the relationship between the approximation error and the attention sharpness using a first-order perturbation analysis.

\begin{proof}
\textbf{Step 1: Second-Order Analysis via Taylor's Theorem with Integral Remainder.}
To achieve a rigorous bound without vague residuals, we expand the softmax $\sigma(z)$ around the approximate logits $\tilde{z}_i$. By Taylor's theorem:
\begin{equation}
    \Delta p_i = J(\tilde{z}_i) \Delta z_i + \int_0^1 (1-t) H(\tilde{z}_i + t\Delta z_i) [\Delta z_i \otimes \Delta z_i] dt
\end{equation}
where $H$ is the Hessian tensor. Since the entries of the softmax Hessian are bounded by a constant (e.g., $\|H\|_\infty \le 1$), the second term is bounded by $\frac{1}{2} \mathcal{C}_H \|\Delta z_i\|_2^2$. Taking the $L_2$ norm of $\Delta p_i$:
\begin{equation}
    \|\Delta p_i\|_2^2 \le \|J(\tilde{z}_i)\|_2^2 \cdot \|\Delta z_i\|_2^2 + \mathcal{C}_H \|\Delta z_i\|_2^4
\end{equation}
This establishes a strict value-level error bound for each token.

\textbf{Step 2: Linking Jacobian Stability to Sharpness.}
The spectral norm of the Jacobian is $\|J\|_2 = p_{\max}(1 - p_{\max})$. We observe that $M_u \approx \sum p_k^2$. For the active operating range of diffusion models (where distributions are concentrated but not yet collapsed into a one-hot delta function), the term $p_{\max}(1 - p_{\max})$ is positively correlated with the distribution energy $\|p\|_2^2$. 
Specifically, using the property $p_{\max}^2 \le \|p\|_2^2$, and noting that for $p_{\max} \in (0, 0.5]$, $\|J\|_2$ is strictly increasing, and even for $p_{\max} > 0.5$, $\|J\|_2^2$ remains bounded by $\|p\|_2^2$. We thus adopt the conservative upper bound $\|J(\tilde{z}_i)\|_2^2 \le \|S_2(i)\|_2^2$ to maintain the direct link to our sharpness metric $M_u$.

\textbf{Step 3: Precise Lipschitz Decomposition of the Interaction Term.}
Let $f(i) = \|S_2(i)\|_2^2$ and $g(i) = \|\Delta z_i\|_2^2$. The Lipschitz constant $L$ of $f(i)$ is well-defined: $\nabla f = 2 \sum p_k (J \nabla z)$, which is bounded by $2 \|W_Q\| \|W_K\|$. 
By the definition of $L$-Lipschitz continuity, for any $i$ in a block of diameter $\delta$, the deviation from the mean $M_u$ is $|f(i) - M_u| \le L\delta$. We decompose the block-level expectation:
\begin{align}
    \mathbb{E}[f(i)g(i)] &= \mathbb{E}[(M_u + (f(i) - M_u)) g(i)] \\
    &\le M_u \mathbb{E}[g(i)] + \mathbb{E}[|f(i) - M_u| \cdot g(i)] \\
    &\le (M_u + L\delta) \mathbb{E}[g(i)]
\end{align}
Combining Step 1 and Step 3, we obtain the final tightened bound:
\begin{equation}
    \mathcal{E}_u \le (M_u + L\delta) \cdot \mathbb{E}[\|\Delta z_i\|_2^2] + \mathcal{O}(\mathbb{E}[\|\Delta z_i\|_2^4])
\end{equation}
The proof is complete. $\square$
\end{proof}

\section{The Algorithm Implementation of ISA}
\label{apd:implementation_isa}
In this section, we present the pseudocode implementation of ISA. For the sake of clarity, we assume identical sequence lengths and head dimensions for $\mathbf{Q}$, $\mathbf{K}$, and $\mathbf{V}$. We provide the pseudocode for ISA in Algorithm~\ref{alg:isa} and for Block-Wise 0-th Taylor Sparse Attention in Algorithm~\ref{alg:block_wise_sparse_attention}. In the latter, we omit batch size and head dimensions to simplify the presentation.

\begin{algorithm*}[t!]
    \small
    \caption{Pseudocode for \hybridattn.}
    \label{alg:isa} 
    \begin{algorithmic}[1]
    \STATE {\bfseries Input:} Query $Q$, Key $K$, Value $V \in \mathbb{R}^{B \times H \times S \times D}$.
    \STATE {\bfseries Input:} Source length $L_{src}$, Context length $L_{ctx}$, Block size $b$, No Sparsity Ratio $\alpha_{ns}$, Flat Ratio $\alpha_{f}$, Select Ratio $\alpha_s$, (Optional) Coarse-grained attention stacking factor $\gamma$.
    \STATE {\bfseries Output:} Attention Output $O \in \mathbb{R}^{B \times H \times S \times D}$.

    \STATE \textbf{Stage 1: Block-wise Compression \& Coarse Scoring}
    \STATE Pad $Q, K, V$ to length divisible by $b$. Total blocks $T = S_{pad}/b$.
    \STATE $Q^c = \mean(Q_{\text{blocks}})$; $K^c = \mean(K_{\text{blocks}})$; $V^c = \mean(V_{\text{blocks}})$; \annotate{// Compress to [B, H, T, D]}
    \STATE $S_{\text{coarse}} = Q^c (K^c)^\top$; \annotate{// Low-res attention map [B, H, T, T]}
    \STATE $O_{\text{coarse}} = S_{\text{coarse}} V^c$; \annotate{// Coarse output for residual connection}

    \STATE \textbf{Stage 2: Context Selection (KV Construction)}
    \STATE Score context blocks: $s_{\text{ctx}} = \sum_{i} S_{\text{coarse}}[:, :, :L_{src}/b, L_{src}/b:][:,:,i,:]$; \annotate{// Sum scores along Q dim}
    \STATE $\mathcal{I}_{\text{topk}} = \text{TopK}(s_{\text{ctx}}, k=\alpha_s \cdot \text{int}(L_{ctx}/b))$; \annotate{// Select most relevant context blocks}
    \STATE $K_{\text{selected}} = \GatherFunc(K, \mathcal{I}_{\text{topk}})$; $V_{\text{selected}} = \GatherFunc(V, \mathcal{I}_{\text{topk}})$;
    \STATE $K_{\text{new}} = \cat(K[:L_{src}], K_{\text{selected}})$; \annotate{// Concat source KV and selected context KV}
    \STATE $V_{\text{new}} = \cat(V[:L_{src}], V_{\text{selected}})$;

    \STATE \textbf{Stage 3: Sharpness-Aware Query Splitting}
    \STATE \textit{// Calculate sharpness for each Query block based on attention distribution}
    \STATE $S_{\text{var}} = \text{Var}(S_{\text{coarse}}[:, :, :, :L_{src}/b], \text{dim=-1})$;
    \STATE Sort blocks by $S_{\text{var}}$: $\mathcal{P} = \text{Argsort}(S_{\text{var}}, \text{descending})$;
    \STATE Split indices: $\mathcal{I}_{\text{sharp}} = \mathcal{P}[: T - \text{int}(\alpha_fT)]$, $\mathcal{I}_{\text{flat}} = \mathcal{P}[T - \text{int}(\alpha_fT):]$; \annotate{// 1-$\alpha_f$ Split}
    
    \STATE $Q_{\text{sharp}} = \GatherFunc(Q, \mathcal{I}_{\text{sharp}})$;
    \STATE $Q_{\text{flat}} =  \GatherFunc(Q, \mathcal{I}_{\text{flat}})$;

    \STATE \textbf{Stage 4: Decoupled Kernel Execution}
    \STATE $O_{\text{flat}} = \taylorattn(Q_{\text{flat}}, K_{\text{new}}, V_{\text{new}}, \alpha_{ns}(\frac{L_{src}+\alpha_s \text{int}(L_{ctx}/b)b}{b}))$; \annotate{// Use Block-wise 0-th Taylor Kernel}
    \STATE $O_{\text{sharp}} = \flashattn(Q_{\text{sharp}}, K_{\text{new}}, V_{\text{new}})$; \annotate{// Use standard FA for flat blocks}

    \STATE \textbf{Stage 5: Reconstruction}
    \STATE Initialize $O_{\text{out}} = \mathbf{0}$;
    \STATE $\scatter(O_{\text{out}}, \mathcal{I}_{\text{sharp}}, O_{\text{sharp}})$; \annotate{// Put sharp results back}
    \STATE $\scatter(O_{\text{out}}, \mathcal{I}_{\text{flat}}, O_{\text{flat}})$; \annotate{// Put flat results back}
    \STATE $O_{\text{final}} = O_{\text{out}}[:S] + \gamma \cdot O_{\text{coarse}}$; \annotate{// Add residual from Stage 1}

    \STATE \textbf{return} $O_{\text{final}}$;
    \end{algorithmic}
\end{algorithm*}

\begin{algorithm*}[t!]
    \small
    \caption{Implementation of \our.}
    \label{alg:block_wise_sparse_attention} 
    \begin{algorithmic}[1]
    \STATE {\bfseries Input:} {Matrices $Q, K, V \in \mathbb{R}^{N \times d}$ (BF16), block size $B$, sparsity count $k$.}
    \STATE {\bfseries Output:} {Attention Output $O \in \mathbb{R}^{N \times d}$.}

    \STATE Divide $Q, K, V$ into blocks $Q_i, K_j, V_j$ of size $B \times d$. Total blocks $T_q = N/B, T_{kv} = N/B$.

    \STATE \textbf{Stage 1: Pre-computation (Block Reductions)}
    \FOR {$i=1$ {\bfseries to} $T_q$} 
        \STATE $Q^\text{c}_i = \mean(Q_i, \text{axis}=0)$; \annotate{// Reduce Q to [T\_q, d]}
    \ENDFOR
    \FOR {$j=1$ {\bfseries to} $T_{kv}$} 
    \STATE $K^\text{c}_j = \mean(K_j, \text{axis}=0)$; ~~$V^\text{c}_j = \mean(V_j, \text{axis}=0)$; \annotate{// Reduce KV to [T\_kv, d]}
    \ENDFOR

    \STATE \textbf{Stage 2: Topology Selection}
    \STATE $S_{\text{coarse}} = Q^\text{c} (K^\text{c})^\top$; \annotate{// Compute block-level relevance score}
    \STATE $\mathcal{I} = \mathrm{TopK}(S_{\text{coarse}}, k, \text{dim}=-1)$; \annotate{// Select indices $\mathcal{I}_i$ for each Query block $i$}

    \STATE \textbf{Stage 3: Fused Attention Kernel}
    \FOR {$i=1$ {\bfseries to} $T_q$} 
        \STATE Load $Q_i$ into SRAM; Load $\mathcal{I}_i$ (indices for $Q_i$);
        \STATE Initialize online softmax stats: $m_i = -\infty, \ell_i = 0, O_i = 0$;

        \STATE \textit{// Part A: Exact Sparse Attention (FlashAttention)}
        \FOR {$j \in \mathcal{I}_i$} 
            \STATE Load $K_j, V_j$ into SRAM;
            \STATE $S_{ij} = Q_i K_j^\top \cdot \text{scale}$;
            \STATE $m_{\text{new}} = \max(m_i, \text{rowmax}(S_{ij}))$;
            \STATE $P_{ij} = \exp(S_{ij} - m_{\text{new}})$;
            \STATE $\ell_i = e^{m_i - m_{\text{new}}} \ell_i + \text{rowsum}(P_{ij})$;
            \STATE $O_i = e^{m_i - m_{\text{new}}} O_i + P_{ij} V_j$;
            \STATE $m_i = m_{\text{new}}$;
        \ENDFOR

        \STATE \textit{// Part B: 0-th Taylor Interval Approximation}
        \FOR {$j=1$ {\bfseries to} $T_{kv}$} 
\IF{$j \notin \mathcal{I}_i$} 
    \STATE Load $K^\text{c}_j, V^\text{c}_j$ into SRAM; \annotate{// Load block means}
    \STATE $S^{\text{avg}}_{ij} = Q_i (K^\text{c}_j)^\top \cdot \text{scale}$; \annotate{// Approximate score}
                \STATE $m_{\text{new}} = \max(m_i, \text{rowmax}(S^{\text{avg}}_{ij}))$;
                \STATE $P^{\text{avg}}_{ij} = \exp(S^{\text{avg}}_{ij} - m_{\text{new}})$;
                \STATE \textit{// Update accumulators weighted by block size $B$}
                \STATE $\ell_i = e^{m_i - m_{\text{new}}} \ell_i + \text{rowsum}(P^{\text{avg}}_{ij}) \cdot B$;
                \STATE $O_i = e^{m_i - m_{\text{new}}} O_i + (P^{\text{avg}}_{ij} V^\text{c}_j) \cdot B$; 
                \STATE $m_i = m_{\text{new}}$;
            \ENDIF
        \ENDFOR

        \STATE $O_i = \text{diag}(\ell_i)^{-1} O_i$; \annotate{// Final Normalization}
        \STATE Store $O_i$ to HBM;
    \ENDFOR

    \STATE \textbf{return} $O$;
    \end{algorithmic}
\end{algorithm*}

\section{Implementation of 0-th Taylor Sparse Attention: Triton Vs. TileLang}
\label{apd:triton_vs_tilelang}

\begin{figure*}[t]
    \centering
    \includegraphics[width=1.0\linewidth,trim={0 0 0cm 0},clip]{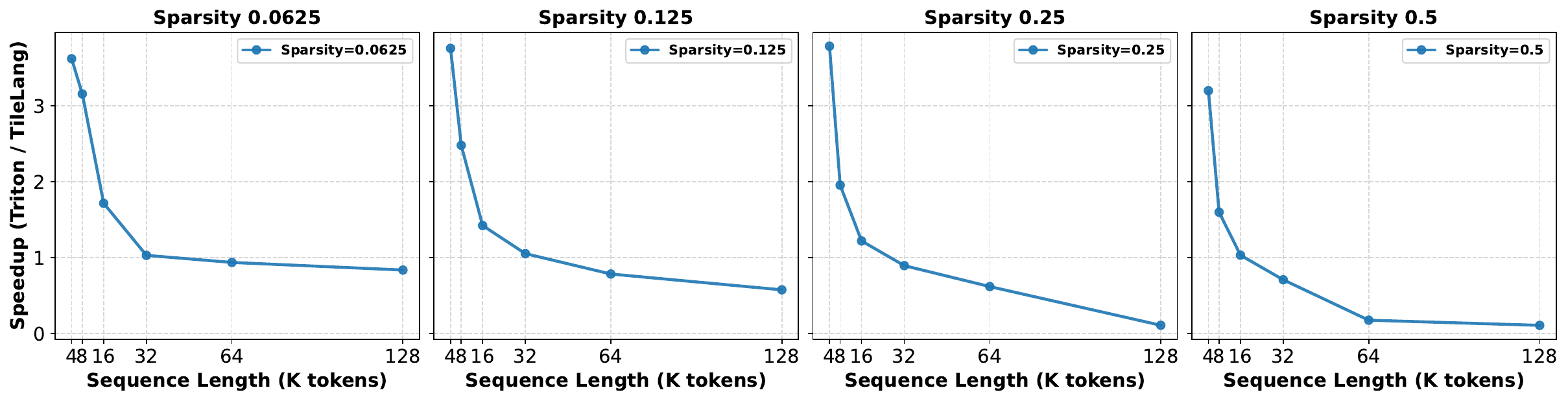}
    \vspace{-18pt}
    \caption{
    \textbf{Efficiency Comparison on Hopper GPU}: TileLang vs. Triton Implementations of Block-Wise Zeroth-Order Taylor Sparse Attention.
    }
    \label{fig:triton_vs_tilelang}
    \vspace{-12pt}
\end{figure*}

We implemented the forward and backward passes of Block-Wise zeroth-order Taylor Sparse Attention using Triton to train \textbf{\texttt{LIVEditor-14B}}. Additionally, we developed a TileLang-based~\cite{tilelang} version for comparison. Fig~\ref{fig:triton_vs_tilelang} illustrates the relative speedup of the TileLang implementation versus the Triton version on a Hopper GPU. We observe that TileLang is more efficient when the No Sparsity Ratio is low and the sequence length is short. However, as the No Sparsity Ratio and sequence length increase, the Triton implementation outperforms TileLang. Given that video editing typically involves sequence lengths of approximately 64K (corresponding to a $544 \times 960$ resolution on Wan-2.1/2.2), we employ the Triton version for our comparative analysis with other sparse attention methods.

\section{Additional Explanation}
\label{apd:explanation}

\textbf{Note on MLLM-based and Concurrent Works.} We exclude comparisons with recent MLLM-based methods, such as EditVerse~\cite{editverse}, UniVideo~\cite{univideo}, and InstructX~\cite{instructx}, due to the unavailability of public implementations and their reliance on non-standardized, proprietary benchmarks. Furthermore, these approaches operate under a distinct paradigm: while they leverage the massive representational capacity of MLLMs, our work specifically targets the acceleration of video editing efficiency. Finally, we do not include LoVoRA~\cite{lovora} and ReCo~\cite{reco} as they are concurrent pre-prints that have not yet appeared in peer-reviewed venues

\textbf{Analysis of Performance Gain over Full Attention.} ISA operates as an approximation algorithm that reconstructs full attention via context token selection and block-wise zeroth-order Taylor sparse attention. Contrary to the expectation that approximation implies degradation, ISA outperforms standard full attention after training. We attribute this improvement to the effective pruning of irrelevant noise tokens. In video editing tasks based on in-context learning, context tokens serve primarily as conditional priors rather than direct generation targets. This role parallels feature extraction in discriminative tasks and relies on high semantic density. The TopK selection mechanism explicitly filters for semantic relevance and discards uninformative tokens to maintain a high-quality context window. Furthermore, the block-wise Taylor attention introduces a distinct attention paradigm rather than functioning merely as a mathematical approximation. As illustrated in Fig.~\ref{fig:ratios_metrics_grid}, the model actively adapts to this sparsity during training. Within this framework, the local mean approximation implicitly serves as a secondary mechanism for information filtering. We conclude that the superior performance of ISA stems from the synergy between TopK pre-selection and Taylor-based aggregation, as both components effectively isolate critical semantic information.




\end{document}